
\documentclass{article}

\usepackage{times}
\usepackage{graphicx} 
\usepackage{subfigure}
\usepackage{natbib}
\usepackage{algorithm}
\usepackage{algorithmic}
\usepackage{amsmath}
\usepackage{amsthm}
\usepackage{amssymb}
\usepackage{hyperref}
\usepackage{enumitem}


\theoremstyle{definition}
\newtheorem{definition}{Definition}

\newtheorem{theorem}{Theorem}
\newtheorem{lemma}[theorem]{Lemma}
\newtheorem{proposition}[theorem]{Proposition}
\newtheorem{corollary}[theorem]{Corollary}
\newtheorem{remark}{Remark}

\setlist{nolistsep}

\usepackage[accepted]{icml2014} 

\icmltitlerunning{On the Relationship between Sum-Product Networks and Bayesian Networks}

\begin{document} 

\twocolumn[
\icmltitle{On the Relationship between Sum-Product Networks and Bayesian Networks}

\icmlauthor{Han Zhao}{han.zhao@uwaterloo.ca}
\icmlauthor{Mazen Melibari}{mmelibar@uwaterloo.ca}
\icmlauthor{Pascal Poupart}{ppoupart@uwaterloo.ca}
\icmladdress{David R. Cheriton School of Computer Science, University of Waterloo, Waterloo, ON N2L 3G1, Canada}

\icmlkeywords{Sum-Product Network, Bayesian Network, Binary Decision Diagram, Algebraic Decision Diagram, Knowledge Compilation, Model Counting}

\vskip 0.3in
]

\begin{abstract} 
In this paper, we establish some theoretical connections between Sum-Product Networks (SPNs) and Bayesian Networks (BNs). We prove that every SPN can be converted into a BN in linear time and space in terms of the network size. The key insight is to use Algebraic Decision Diagrams (ADDs) to compactly represent the local conditional probability distributions at each node in the resulting BN by exploiting context-specific independence (CSI). The generated BN has a simple directed bipartite graphical structure. We show that by applying the Variable Elimination algorithm (VE) to the generated BN with ADD representations, we can recover the original SPN where the SPN can be viewed as a history record or caching of the VE inference process. To help state the proof clearly, we introduce the notion of {\em normal} SPN and present a theoretical analysis of the consistency and decomposability properties. We conclude the paper with some discussion of the implications of the proof and establish a connection between the depth of an SPN and a lower bound of the tree-width of its corresponding BN. 
\end{abstract}

\section{Introduction}
Sum-Product Networks (SPNs) have recently been proposed as tractable deep models~\cite{poon2011sum} for probabilistic inference. They distinguish themselves from other types of probabilistic graphical models (PGMs), including Bayesian Networks (BNs) and Markov Networks (MNs), by the fact that inference can be done exactly in linear time with respect to the size of the network. This has generated a lot of interest since inference is often a core task for parameter estimation and structure learning, and it typically needs to be approximated to ensure tractability since probabilistic inference in BNs and MNs is \#P-complete~\cite{roth1996hardness}.

The relationship between SPNs and BNs, and more broadly with PGMs, is not clear. Since the introduction of SPNs in the seminal paper of \citet{poon2011sum}, it is well understood that SPNs and BNs are equally expressive in the sense that they can represent any joint distribution over discrete variables\footnote{Joint distributions over continuous variables are also possible, but we will restrict ourselves to discrete variables in this paper.}, but it is not clear how to convert SPNs into BNs, nor whether a blow up may occur in the conversion process.  The common belief is that there exists a distribution such that the smallest BN that encodes this distribution is exponentially larger than the smallest SPN that encodes this same distribution. The key behind this belief lies in SPNs' ability to exploit context-specific independence (CSI)~\cite{boutilier1996context}. 

While the above belief is correct for classic BNs with tabular conditional probability distributions (CPDs) that ignore CSI, and for BNs with tree-based CPDs due to the replication problem~\cite{pagallo1989learning}, it is not clear whether it is correct for BNs with more compact representations of the CPDs. The other direction is clear for classic BNs with tabular representation: given a BN with tabular representation of its CPDs, we can build an SPN that represents the same joint probability distribution in time and space complexity that may be exponential in the tree-width of the BN. Briefly, this is done by first constructing a junction tree and translate it into an SPN\footnote{http://spn.cs.washington.edu/faq.shtml}. However, to the best of our knowledge, it is still unknown how to convert an SPN into a BN and whether the conversion will lead to a blow up when more compact representations than tables and trees are used for the CPDs.

We prove in this paper that by adopting Algebraic Decision Diagrams (ADDs)~\cite{bahar1997algebric} to represent the CPDs at each node in a BN, every SPN can be converted into a BN in linear time and space complexity in the size of the SPN. The generated BN has a simple bipartite structure, which facilitates the analysis of the structure of an SPN in terms of the structure of the generated BN. Furthermore, we show that by applying the Variable Elimination (VE) algorithm~\cite{zhang1996exploiting} to the generated BN with ADD representation of its CPDs, we can recover the original SPN in linear time and space with respect to the size of the SPN. 

Our contributions can be summarized as follows. First, we present a constructive algorithm and a proof for the conversion of SPNs into BNs using ADDs to represent the local CPDs. The conversion process is bounded by a linear function of the size of the SPN in both time and space. This gives a new perspective to understand the probabilistic semantics implied by the structure of an SPN through the generated BN. Second, we show that by executing VE on the generated BN, we can recover the original SPN in linear time and space complexity in the size of the SPN. Combined with the first point, this establishes a clear relationship between SPNs and BNs. Third, we introduce the subclass of {\em normal} SPNs and show that every SPN can be transformed into a normal SPN in quadratic time and space. Compared with general SPNs, the structure of normal SPNs exhibit more intuitive probabilistic semantics and hence normal SPNs are used as a bridge in the conversion of general SPNs to BNs. Fourth, our construction and analysis provides a new direction for learning the parameter/structure of BNs since the SPNs produced by the algorithms that learn SPNs~\cite{dennis2012learning,gens2013learning,peharz2013greedy,rooshenas2014learning} can be converted into BNs.

\section{Related Work}
Exact probabilistic reasoning has a close connection with propositional logic and weighted model counting~\cite{roth1996hardness,gomes2008model,bacchus2003algorithms,sang2005performing}. The model counting problem, \#SAT, is the problem of computing the number of models for a given propositional formula, i.e., the number of distinct truth assignments of the variables for which the formula evaluates to \texttt{TRUE}. In its weighted version, each boolean variable $X$ has a weight $\Pr(x)\in[0, 1]$ when set to \texttt{TRUE} and a weight $1-\Pr(x)$ when set to \texttt{FALSE}. The weight of a truth assignment is the product of the weights of its literals. The weighted model counting problem then asks the sum of the weights of all satisfying truth assignments. There are two important streams of research for exact weighted model counting and exact probabilistic reasoning that relate to SPNs: DPLL-style exhaustive search~\cite{birnbaum2011good} and those based on \emph{knowledge compilation}, e.g., Binary Decision Diagrams (BDDs), Decomposable Negation Normal Forms (DNNFs) and Arithmetic Circuits (ACs)~\cite{bryant1986graph,darwiche2001decomposable,darwiche2000differential}
.

The SPN, as an inference machine, has a close connection with the broader field of knowledge representation and knowledge compilation. In knowledge compilation, the reasoning process is divided into two phases: an offline compilation phase and an online query-answering phase. In the offline phase, the knowledge base, either propositional theory or belief network, is compiled into some tractable target language. In the online phase, the compiled target model is used to answer a large number of queries efficiently. The key motivation of knowledge compilation is to shift the computation that is common to many queries from the online phase into the offline phase. As an example, ACs have been studied and used extensively in both knowledge representation and probabilistic inference~\cite{darwiche2000differential,huang2006solving,chavira2006compiling}. \citet{rooshenas2014learning} recently showed that ACs and SPNs can be converted mutually without an exponential blow-up in both time and space. As a direct result, ACs and SPNs share the same expressiveness for probabilistic reasoning. 

Another representation closely related to SPNs in propositional logic and knowledge representation is the deterministic-Decomposable Negation Normal Form (d-DNNF)~\cite{darwiche2001perspective}. Propositional formulas in d-DNNF are represented by a directed acyclic graph (DAG) structure to enable the re-usability of sub-formulas.  The terminal nodes of the DAG are literals and the internal nodes are  \texttt{AND} or \texttt{OR} operators. Like SPNs, d-DNNF formulas can be queried to answer satisfiability and model counting problems.  We refer interested readers to \citet{darwiche2001perspective} and \citet{darwiche2001decomposable} for more detailed discussions.

Since their introduction by \citet{poon2011sum}, SPNs have generated a lot of interest as a tractable class of models for probabilistic inference in machine learning. Discriminative learning techniques for SPNs have been proposed and applied to image classification~\cite{gens2012discriminative}. Later, automatic structure learning algorithms were developed to build tree-structured SPNs directly from data~\cite{dennis2012learning,peharz2013greedy,gens2013learning,rooshenas2014learning}.
SPNs have also been applied to various fields and have generated promising results, including activity modeling~\cite{amer2012sum}, speech modeling~\cite{peharz2014modeling} and language modeling~\cite{cheng2014language}. Theoretical work investigating the influence of the depth of SPNs on expressiveness exists~\cite{delalleau2011shallow}, but is quite limited. As discussed later, our results reinforce previous theoretical results about the depth of SPNs and provide further insights about the structure of SPNs by examining the structure of equivalent BNs.

\section{Preliminaries}
We start by introducing the notation used in this paper. We use $1:N$ to abbreviate the notation $\{1,2,\ldots,N\}$. We use a capital letter $X$ to denote a random variable and a bold capital letter $\mathbf{X}_{1:N}$ to denote a set of random variables $\mathbf{X}_{1:N} = \{X_1, \ldots,X_N\}$. Similarly, a lowercase letter $x$ is used to denote a value taken by $X$ and a bold lowercase letter $\mathbf{x}_{1:N}$ denotes a joint value taken by the corresponding vector $\mathbf{X}_{1:N}$ of random variables. We may omit the subscript $1:N$ from $\mathbf{X}_{1:N}$ and $\mathbf{x}_{1:N}$ if it is clear from the context. For a random variable $X_i$, we use $x_i^j, j \in 1:J$ to enumerate all the values taken by $X_i$. For simplicity, we use $\Pr(x)$ to mean $\Pr(X=x)$ and $\Pr(\mathbf{x})$ to mean $\Pr(\mathbf{X} = \mathbf{x})$. We use calligraphic letters to denote graphs (e.g., $\mathcal{G}$). In particular, BNs, SPNs and ADDs are denoted respectively by $\mathcal{B}$, $\mathcal{S}$ and $\mathcal{A}$.  For a DAG $\mathcal{G}$ and a node $v$ in $\mathcal{G}$, we use $\mathcal{G}_v$ to denote the subgraph of $\mathcal{G}$ induced by $v$ and all its descendants. Let $\mathbf{V}$ be a subset of the nodes of $\mathcal{G}$, then $\mathcal{G}|_{\mathbf{V}}$ is a subgraph of $\mathcal{G}$ induced by the node set $\mathbf{V}$. Similarly, we use $\mathbf{X}|_{A}$ or $\mathbf{x}|_{A}$ to denote the restriction of a vector to a subset $A$. We use node and vertex, arc and edge interchangeably when we refer to a graph. Other notation will be introduced when needed.

To ensure that the paper is self contained, we briefly review some background material about Bayesian Networks, Algebraic Decision Diagrams and Sum-Product Networks. Readers who are already familiar with those models can skip the following subsections.

\subsection{Bayesian Network}
Consider a problem whose domain is characterized by a set of random variables $\mathbf{X}_{1:N}$ with finite support. The joint probability distribution over $\mathbf{X}_{1:N}$ can be characterized by a \emph{Bayesian Network}, which is a DAG where nodes represent the random variables and edges represent probabilistic dependencies among the variables. In a BN, we also use the terms ``node'' and ``variable'' interchangeably. For each variable in a BN, there is a local conditional probability distribution (CPD) over the variable given its parents in the BN. 

The structure of a BN encodes conditional independencies among the variables in it. Let $X_1, X_2, \ldots, X_N$ be a topological ordering of all the nodes in a BN\footnote{A topological ordering of nodes in a DAG is a linear ordering of its nodes such that each node appears after all its parents in this ordering.}, and let $\pi_{X_i}$ be the set of parents of node $X_i$ in the BN. Each variable in a BN is conditionally independent of all its non-descendants given its parents. Hence, the joint probability distribution over $\mathbf{X}_{1:N}$ admits the factorization in Eq.~\ref{equ:bnfactorization}. 
\begin{equation}
\label{equ:bnfactorization}
\Pr(\mathbf{X}_{1:N}) = \prod_{i=1}^N \Pr(X_i~|~\mathbf{X}_{1:i-1}) = \prod_{i=1}^N \Pr(X_i~|~\pi_{X_i})
\end{equation}
Given the factorization, one can use various inference algorithms to do probabilistic reasoning in BNs. See \citet{wainwright2008graphical} for a comprehensive survey.

\subsection{Algebraic Decision Diagram}
We first give a formal definition of Algebraic Decision Diagrams (ADDs) for variables with Boolean domains and then extend the definition to domains corresponding to arbitrary finite sets.
\begin{definition}[Algebraic Decision Diagram~\cite{bahar1997algebric}]
An Algebraic Decision Diagram (ADD) is a graphical representation of a real function with Boolean input variables: $f:\{0,1\}^N\mapsto\mathbb{R}$, where the graph is a rooted DAG. There are two kinds of nodes in an ADD. Terminal nodes, whose out-degree is 0, are associated with real values. Internal nodes, whose out-degree is 2, are associated with Boolean variables $X_n, n\in1:N$. For each internal node $X_n$, the left out-edge is labeled with $X_n = \texttt{FALSE}$ and the right out-edge is labeled with $X_n=\texttt{TRUE}$. 
\end{definition}
We extend the original definition of an ADD by allowing it to represent not only functions of Boolean variables, but also any function of discrete variables with a finite set as domain. This can be done by allowing each internal node $X_n$ to have $|\mathcal{X}_n|$ out-edges and label each edge with $x_n^j, j\in 1:|\mathcal{X}_n|$, where $\mathcal{X}_n$ is the domain of variable $X_n$ and $|\mathcal{X}_n|$ is the number of values $X_n$ takes. Such an ADD represents a function $f:\mathcal{X}_1\times\cdots\times\mathcal{X}_N\mapsto\mathbb{R}$,  where $\times$ means the Cartesian product between two sets. Henceforth, we will use our extended definition of ADDs throughout the paper. 

For our purpose, we will use an ADD as a compact graphical representation of local CPDs associated with each node in a BN. This is a key insight of our constructive proof presented later. Compared with a tabular representation or a decision tree representation of local CPDs, CPDs represented by ADDs can fully exploit CSI~\cite{boutilier1996context} and effectively avoid the replication problem~\cite{pagallo1989learning} of the decision tree representation. 

We give an example in Fig.~\ref{fig:add-example} where the tabular representation, decision-tree representation and ADD representation of a function of 4 Boolean variables is presented.
\begin{figure*}[htb]
\centering
	\subfigure[Tabular representation.]{
		\begin{minipage}[b]{0.3\textwidth}
			\centering
			\includegraphics[width=\textwidth]{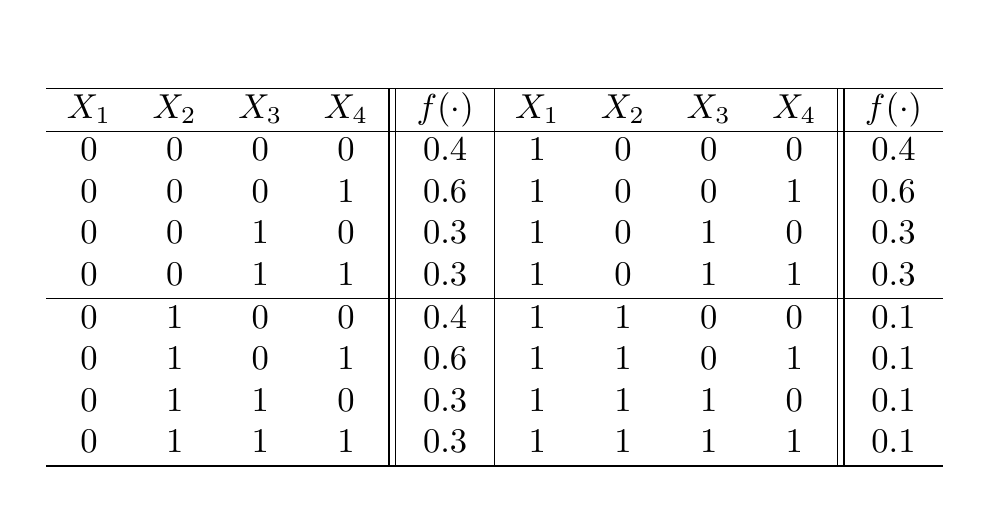}		
		\end{minipage}
		\label{fig:table-cpd}
	}
	~
	\subfigure[Decision-Tree representation.]{
		\begin{minipage}[b]{0.3\textwidth}
			\includegraphics[width=\textwidth]{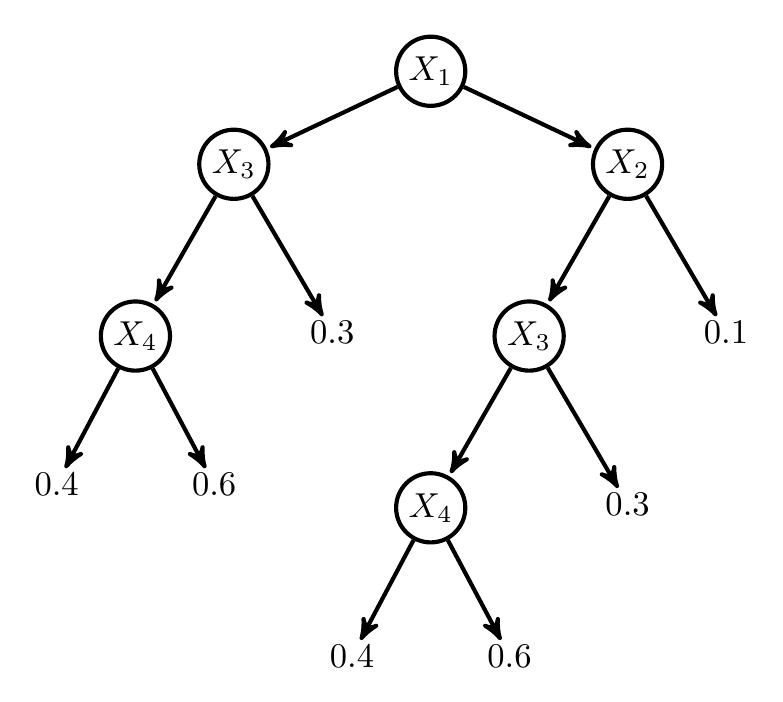}
		\end{minipage}
		\label{fig:dt-cpd}
	}
	~
	\subfigure[ADD representation.]{
		\begin{minipage}[b]{0.3\textwidth}
			\includegraphics[width=\textwidth]{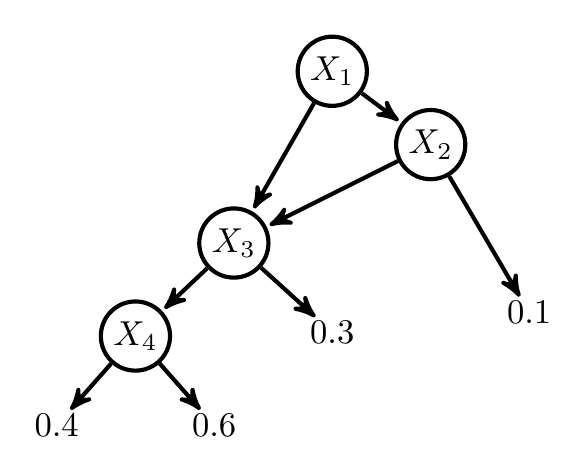}
		\end{minipage}
		\label{fig:add-cpd}
	}
\caption{Different representations of the same Boolean function. The tabular representation cannot exploit CSI and the Decision-Tree representation cannot reuse isomorphic subgraphs. The ADD representation can fully exploit CSI by sharing isomorphic subgraphs, which makes it the most compact representation among the three representations. In Fig.~\ref{fig:dt-cpd} and Fig.~\ref{fig:add-cpd}, the left and right branches of each internal node correspond respectively to \texttt{FALSE} and \texttt{TRUE}.}
\label{fig:add-example}
\end{figure*}
Another advantage of ADDs to represent local CPDs is that arithmetic operations such as multiplying ADDs and summing-out a variable from an ADD can be implemented efficiently in polynomial time.  This will allow us to use ADDs in the Variable Elimination (VE) algorithm to recover the original SPN after its conversion to a BN with CPDs represented by ADDs. Readers are referred to \citet{bahar1997algebric} for more detailed and thorough discussions about ADDs.

\subsection{Sum-Product Network}
Before introducing SPNs, we first define the notion of \emph{network polynomial}, which plays an important role in our proof. We use $\mathbb{I}[X = x]$ to denote an indicator that returns 1 when $X = x$ and 0 otherwise. To simplify the notation, we will use $\mathbb{I}_{x}$ to represent $\mathbb{I}[X = x]$.
\begin{definition}[Network Polynomial~\cite{poon2011sum}]
\label{def:networkpolynomial}
Let $f(\cdot)\geq 0$ be an unnormalized probability distribution over a Boolean random vector $\mathbf{X}_{1:N}$. The network polynomial of $f(\cdot)$ is a multilinear function $\sum_{\mathbf{x}}f(\mathbf{x})\prod_{n=1}^N\mathbb{I}_{\mathbf{x}_n}$ of indicator variables, where the summation is over all possible instantiations of the Boolean random vector $\mathbf{X}_{1:N}$.
\end{definition}
Intuitively, the network polynomial is a Boolean expansion~\cite{boole1847mathematical} of the unnormalized probability distribution $f(\cdot)$. For example, the network polynomial of a BN $X_1\rightarrow X_2$ is $\Pr(x_1, x_2)\mathbb{I}_{x_1}\mathbb{I}_{x_2} + \Pr(x_1, \bar{x}_2)\mathbb{I}_{x_1}\mathbb{I}_{\bar{x}_2} + \Pr(\bar{x}_1, x_2)\mathbb{I}_{\bar{x}_1}\mathbb{I}_{x_2} + \Pr(\bar{x}_1, \bar{x}_2)\mathbb{I}_{\bar{x}_1}\mathbb{I}_{\bar{x}_2}$.
\begin{definition}[Sum-Product Network~\cite{poon2011sum}]
\label{def:spn}
A Sum-Product Network (SPN) over Boolean variables $\mathbf{X}_{1:N}$ is a rooted DAG whose leaves are the indicators $\mathbb{I}_{x_1},\ldots,\mathbb{I}_{x_N}$ and $\mathbb{I}_{\bar{x}_1}, \ldots, \mathbb{I}_{\bar{x}_N}$ and whose internal nodes are sums and products. Each edge $(v_i,v_j)$ emanating from a sum node $v_i$ has a non-negative weight $w_{ij}$. The value of a product node is the product of the values of its children. The value of a sum node is $\sum_{v_j\in Ch(v_i)}w_{ij}val(v_j)$ 
where $Ch(v_i)$ are the children of $v_i$ and 
$val(v_j)$ is the value of node $v_j$. The value of an SPN $\mathcal{S}[\mathbb{I}_{x_1}, \mathbb{I}_{\bar{x}_1}, \ldots, \mathbb{I}_{x_N}, \mathbb{I}_{\bar{x}_N}]$ is the value of its root.
\end{definition}
The \emph{scope} of a node in an SPN is defined as the set of variables that have indicators among the node's descendants: For any node $v$ in an SPN, if $v$ is a terminal node, say, an indicator variable over $X$, then $\text{scope}(v)=\{X\}$, else $\text{scope}(v)=\bigcup_{\tilde{v}\in Ch(v)}\text{scope}(\tilde{v})$. \citet{poon2011sum} further define the following properties of an SPN:
\begin{definition}[Complete]
An SPN is \emph{complete} iff each sum node has children with the same scope.
\end{definition}
\begin{definition}[Consistent]
An SPN is consistent iff no variable appears negated in one child of a product node and non-negated in another.
\end{definition}
\begin{definition}[Decomposable]
An SPN is decomposable iff for every product node $v$, scope($v_i$) $\bigcap$ scope($v_j$) $=\varnothing$ where $v_i, v_j\in Ch(v), i\neq j$. 
\end{definition}
Clearly, decomposability implies consistency in SPNs. An SPN is said to be \emph{valid} iff it defines a (unnormalized) probability distribution. \citet{poon2011sum} proved that if an SPN is complete and consistent, then it is valid. Note that this is a sufficient, but not necessary condition. In this paper, we focus only on complete and consistent SPNs as we are interested in their associated probabilistic semantics. For a complete and consistent SPN $\mathcal{S}$, each node $v$ in $\mathcal{S}$ defines a network polynomial $f_v(\cdot)$ which corresponds to the sub-SPN rooted at $v$. The network polynomial defined by the root of the SPN can then be computed recursively by taking a weighted sum of the network polynomials defined by the sub-SPNs rooted at the children of each sum node and a product of the network polynomials defined by the sub-SPNs rooted at the children of each product node. The probability distribution induced by an SPN $\mathcal{S}$ is defined as $\Pr_{\mathcal{S}}(\mathbf{x})\triangleq \frac{f_S(\mathbf{x})}{\sum_{\mathbf{x}}f_S(\mathbf{x})}$, where $f_{\mathcal{S}}(\cdot)$ is the network polynomial defined by the root of the SPN $\mathcal{S}$. An example of a complete and consistent SPN is given in Fig.~\ref{fig:spn-example}. 
\begin{figure}[htb]
\centering
	\includegraphics[width=0.6\linewidth]{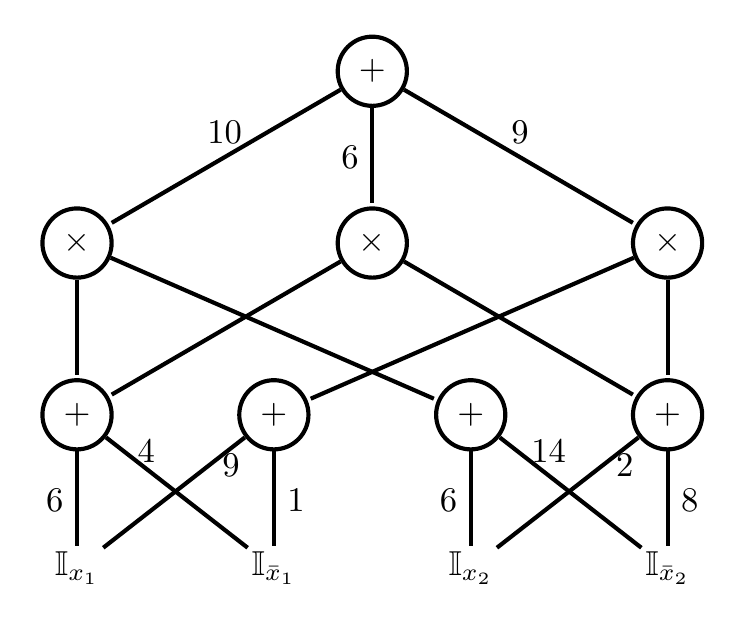}
	\caption{A complete and consistent SPN over Boolean variables $X_1$, $X_2$. This SPN is also decomposable since every product node has children whose scopes do not intersect. The network polynomial defined by (the root of) this SPN is: $f(X_1, X_2) = 10(6\mathbb{I}_{x_1} + 4\mathbb{I}_{\bar{x}_1})(6\mathbb{I}_{x_2} + 14\mathbb{I}_{\bar{x}_2}) + 6(6\mathbb{I}_{x_1} + 4\mathbb{I}_{\bar{x}_1})(2\mathbb{I}_{x_2} + 8\mathbb{I}_{\bar{x}_2}) + 9(9\mathbb{I}_{x_1} + \mathbb{I}_{\bar{x}_1})(2\mathbb{I}_{x_2} + 8\mathbb{I}_{\bar{x}_2}) = 594\mathbb{I}_{x_1}\mathbb{I}_{x_2} + 1776\mathbb{I}_{x_1}\mathbb{I}_{\bar{x}_2} + 306\mathbb{I}_{\bar{x}_1}\mathbb{I}_{x_2} + 824\mathbb{I}_{\bar{x}_1}\mathbb{I}_{\bar{x}_2}$ and the probability distribution induced by $\mathcal{S}$ is $\Pr_{\mathcal{S}} = \frac{594}{3500}\mathbb{I}_{x_1}\mathbb{I}_{x_2} + \frac{1776}{3500}\mathbb{I}_{x_1}\mathbb{I}_{\bar{x}_2} + \frac{306}{3500}\mathbb{I}_{\bar{x}_1}\mathbb{I}_{x_2} + \frac{824}{3500}\mathbb{I}_{\bar{x}_1}\mathbb{I}_{\bar{x}_2}$.}
	\label{fig:spn-example}
\end{figure}

\section{Main Results}
In this section, we first state the main results obtained in this paper and then provide detailed proofs with some discussion of the results. To keep the presentation simple, we assume without loss of generality that all the random variables are Boolean unless explicitly stated. It is straightforward to extend our analysis to discrete random variables with finite support. For an SPN $\mathcal{S}$, let $|\mathcal{S}|$ be the size of the SPN, i.e., the number of nodes plus the number of edges in the graph. For a BN $\mathcal{B}$, the size of $\mathcal{B}$, $|\mathcal{B}|$, is defined by the size of the graph \emph{plus} the size of all the CPDs in $\mathcal{B}$ (the size of a CPD depends on its representation, which will be clear from the context). The main theorems are:
\begin{theorem}
\label{thm:spn2bn}
There exists an algorithm that converts any complete and decomposable SPN $\mathcal{S}$ over Boolean variables $\mathbf{X}_{1:N}$ into a BN $\mathcal{B}$ with CPDs represented by ADDs in time $O(N|\mathcal{S}|)$.  Furthermore, $\mathcal{S}$ and $\mathcal{B}$ represent the same distribution and $|\mathcal{B}| = O(N|\mathcal{S}|)$.
\end{theorem}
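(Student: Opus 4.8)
The plan is to prove the theorem constructively and then verify correctness and the resource bounds separately. Since the later sections develop \emph{normal} SPNs, I would first reduce to that case: transform $\mathcal{S}$ into an equivalent SPN $\mathcal{S}'$ in which the weights leaving each sum node are rescaled so that every sub-SPN is itself a normalized distribution (equivalently, each sum node's outgoing weights are multiplied by the partition functions of the corresponding children and then renormalized). This rescaling does not change $\Pr_{\mathcal{S}}$, but it makes the weights behave as genuine local conditional probabilities, which is what lets the bipartite construction go through cleanly. This normalization is the subtle enabling ingredient: a naive local normalization that merely divides each sum node's weights by their sum does \emph{not} preserve the distribution once sub-SPNs are shared, so the partition functions must be propagated correctly (they can be computed in a single bottom-up pass in $O(|\mathcal{S}|)$ time).

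Given $\mathcal{S}'$, I would build the BN $\mathcal{B}$ as follows. Introduce one hidden variable $H_v$ for every sum node $v$, whose state space indexes the children of $v$, with prior CPD $\Pr(H_v=k)$ equal to the (normalized) weight of the $k$-th edge. The observable layer is $\mathbf{X}_{1:N}$, and the only arcs of $\mathcal{B}$ run from hidden variables down to observables, giving the promised directed bipartite shape. For each $X_j$ I would define its CPD by an ADD obtained from a single traversal of $\mathcal{S}'$: descend from the root, branch on $H_v$ at each sum node $v$ with $X_j\in\text{scope}(v)$, and, using decomposability, follow the unique child whose scope contains $X_j$ at each product node, until reaching an indicator leaf $\mathbb{I}_{x_j}$ or $\mathbb{I}_{\bar{x}_j}$ that fixes the value of $X_j$ in that context. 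Memoizing on SPN nodes preserves the shared DAG structure, so the ADD for each $X_j$ contains at most one internal node per sum node whose scope includes $X_j$.

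The correctness claim, $\Pr_{\mathcal{B}}(\mathbf{x})=\Pr_{\mathcal{S}'}(\mathbf{x})=\Pr_{\mathcal{S}}(\mathbf{x})$, is where I expect the real work to lie. I would prove it by matching the marginalized BN joint $\sum_{\mathbf{h}}\prod_v \Pr(H_v=h_v)\prod_j \Pr(x_j\mid\mathbf{h})$ against the induced-tree expansion of the network polynomial of a complete and decomposable SPN, i.e.\ the sum, over all subtrees that select one child at each reachable sum node and all children at each reachable product node, of the product of selected weights times the product of the leaf indicators. The key point to establish is that making the hidden variables \emph{marginally independent} is consistent with the DAG sharing in $\mathcal{S}'$: because each $H_v$ encodes only the local child-selection at $v$, and because the ADD CPDs are context-specific, any hidden variable irrelevant to the currently selected tree is simply marginalized out (summing to one), while the surviving factors reassemble exactly the weight and indicator product of that tree. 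I would carry this out by induction in reverse topological order over $\mathcal{S}'$, tracking the partition functions so that completeness supplies the correct mixing weights at sum nodes and decomposability supplies the clean product factorization at product nodes.

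Finally, the complexity bounds follow by counting. The number of hidden variables is at most the number of sum nodes, hence $O(|\mathcal{S}|)$; each observable ADD has size $O(|\mathcal{S}|)$ and is produced by one $O(|\mathcal{S}|)$-time traversal, so over the $N$ variables the total construction time and the total CPD size are both $O(N|\mathcal{S}|)$, giving $|\mathcal{B}|=O(N|\mathcal{S}|)$ as claimed. The main obstacle, to reiterate, is the correctness argument rather than the bookkeeping: I must show that independence of the latent selectors together with the context-specific ADD structure exactly reproduces the shared-DAG semantics of the SPN, and that the normalization has been propagated so that the induced mixing weights integrate to the correct totals.
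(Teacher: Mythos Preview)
Your proposal is correct and follows essentially the same approach as the paper: the same weight-normalization pass (the paper's Alg.~2/Lemma~\ref{lemma:normalization}), the same bipartite BN with one hidden selector per sum node and per-variable ADDs built by traversing the SPN (the paper's Alg.~3--5), and the same $O(N|\mathcal{S}|)$ accounting. The only cosmetic difference is that you frame the correctness argument via the induced-tree expansion before carrying out the bottom-up induction, whereas the paper does a direct induction on the height of $\mathcal{S}$ (Thm.~\ref{thm:equaldist}), but the crucial step---that hidden variables not on the currently selected branch are context-specifically irrelevant and marginalize to one---is exactly the paper's CSI identity (Eq.~\ref{equ:csii}).
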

As it will be clear later, Thm.~\ref{thm:spn2bn} immediately leads to the following corollary:
\begin{corollary}
\label{coro:spn2bn}
There exists an algorithm that converts any complete and consistent SPN $\mathcal{S}$ over Boolean variables $\mathbf{X}_{1:N}$ into a BN $\mathcal{B}$ with CPDs represented by ADDs in time $O(N|\mathcal{S}|^2)$.  Furthermore, $\mathcal{S}$ and $\mathcal{B}$ represent the same distribution and $|\mathcal{B}| = O(N|\mathcal{S}|^2)$.
\end{corollary}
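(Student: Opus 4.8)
The plan is to reduce Corollary~\ref{coro:spn2bn} to Theorem~\ref{thm:spn2bn} by first turning an arbitrary complete and consistent SPN into a complete and \emph{decomposable} one, and then invoking the decomposable conversion algorithm as a black box. Concretely, I would establish an intermediate lemma: every complete and consistent SPN $\mathcal{S}$ over $\mathbf{X}_{1:N}$ can be transformed into a complete and decomposable SPN $\mathcal{D}$ computing the same network polynomial with $|\mathcal{D}| = O(|\mathcal{S}|^2)$, in $O(|\mathcal{S}|^2)$ time. Given such a lemma the corollary is immediate: apply the lemma to obtain $\mathcal{D}$, then run the algorithm of Theorem~\ref{thm:spn2bn} on $\mathcal{D}$, producing a BN $\mathcal{B}$ in time $O(N|\mathcal{D}|) = O(N|\mathcal{S}|^2)$ with $|\mathcal{B}| = O(N|\mathcal{D}|) = O(N|\mathcal{S}|^2)$. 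Since $\mathcal{B}$ represents the distribution of $\mathcal{D}$, which by construction equals that of $\mathcal{S}$, all three claims (time, size, equality of distributions) follow, and the extra factor of $|\mathcal{S}|$ relative to Theorem~\ref{thm:spn2bn} is exactly the quadratic cost of enforcing decomposability.

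The heart of the argument is therefore the consistency-to-decomposability lemma, which rests on a structural observation about consistent product nodes. Consider a product node $v$ and a variable $X$ lying in the scope of at least two of its children. I would first argue that consistency forces $X$ to occur with a single polarity throughout the sub-SPN rooted at $v$: if one child's sub-SPN contained both $\mathbb{I}_{x}$ and $\mathbb{I}_{\bar{x}}$ while another child also had $X$ in scope, then $v$ would witness $X$ negated in one child and non-negated in another, contradicting consistency. Hence, without loss of generality, only $\mathbb{I}_{x}$ (and never $\mathbb{I}_{\bar{x}}$) appears below $v$. Using completeness---which guarantees that every monomial of $f_v$ contains an indicator for each variable in the scope of $v$---together with the idempotency $\mathbb{I}_{x}^{2}=\mathbb{I}_{x}$ of $0/1$ indicators, I would show that substituting $\mathbb{I}_{x}\mapsto 1$ everywhere inside $v$'s sub-SPN and attaching a single fresh leaf $\mathbb{I}_{x}$ as a new child of $v$ leaves the value of $v$ unchanged (it matches the original both when $\mathbb{I}_{x}=1$ and, since every surviving monomial carries $\mathbb{I}_{x}$, when $\mathbb{I}_{x}=0$) while removing $X$ from the scopes of all original children. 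Repeating this for every shared variable at every product node eliminates all scope overlaps; completeness is preserved because $X$ is stripped uniformly from all descendants of $v$, so every sum node below $v$ retains children of equal scope.

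The main obstacle is controlling the size blow-up, and this is where the DAG structure of SPNs bites. Because nodes are shared across the SPN, I cannot edit $v$'s sub-SPN in place: setting $\mathbb{I}_{x}\mapsto 1$ inside it would corrupt the values of other ancestors that reuse those nodes. The remedy is to copy the affected sub-DAG before modifying it, and the delicate part of the proof is to bound the total copying cost. I would process product nodes in a single bottom-up (topological) sweep, charge each product node a copy of size $O(|\mathcal{S}|)$, and argue that these copies do not cascade multiplicatively, so the accumulated size remains $O(|\mathcal{S}|^2)$ rather than blowing up further; this amortized accounting, together with a verification that each local rewrite preserves both completeness and the represented distribution, is the crux I would need to nail down. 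I expect the cleanest route is to phrase this normalization through the intermediate \emph{normal} form alluded to in the introduction, for which the quadratic bound can be stated once and then simply composed with Theorem~\ref{thm:spn2bn}.
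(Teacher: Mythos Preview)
Your proposal is correct and mirrors the paper's own route: the paper proves Corollary~\ref{coro:spn2bn} by first invoking its normal-form theorem (whose core ingredient is precisely the consistency-to-decomposability transformation you sketch, with the same single-polarity observation, the idempotency $\mathbb{I}_x^2=\mathbb{I}_x$, the duplication of shared sub-DAGs, and the per-product-node $O(|\mathcal{S}|)$ charge summed over $O(|\mathcal{S}|)$ iterations) and then applying Theorem~\ref{thm:spn2bn} as a black box. Your closing remark that the cleanest phrasing is via the normal form is exactly how the paper packages it.
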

\begin{remark}
The BN $\mathcal{B}$ generated from $\mathcal{S}$ in Theorem~\ref{thm:spn2bn} and Corollary~\ref{coro:spn2bn} has a simple bipartite DAG structure, where all the source nodes are hidden variables and the terminal nodes are the Boolean variables $\mathbf{X}_{1:N}$.
\end{remark}
\begin{remark}
Assuming sum nodes alternate with product nodes in SPN $\mathcal{S}$, the depth of $\mathcal{S}$ is proportional to the maximum in-degree of the nodes in $\mathcal{B}$, which, as a result, is proportional to a lower bound of the tree-width of $\mathcal{B}$.
\end{remark}
\begin{theorem}
\label{thm:bn2spn}
Given the BN $\mathcal{B}$ with ADD representation of CPDs generated from a complete and decomposable SPN $\mathcal{S}$ over Boolean variables $\mathbf{X}_{1:N}$, the original SPN $\mathcal{S}$ can be recovered by applying the Variable Elimination algorithm to $\mathcal{B}$ in $O(N|\mathcal{S}|)$.
\end{theorem}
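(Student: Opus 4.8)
The plan is to exploit the explicit bipartite structure of $\mathcal{B}$ from Theorem~\ref{thm:spn2bn} and argue that a carefully ordered run of VE traces out exactly the DAG of $\mathcal{S}$. Recall that $\mathcal{B}$ places the hidden (source) variables $\mathbf{H}$ in one-to-one correspondence with the sum nodes of $\mathcal{S}$, keeps the observed variables $\mathbf{X}_{1:N}$ as sinks, and stores every CPD as an ADD. To recover the network polynomial $f_{\mathcal{S}}$ rather than a single probability, I would run VE while keeping the observed variables symbolic, i.e. leave each factor $\Pr(X_n\mid\pi_{X_n})$ expressed through the indicators $\mathbb{I}_{x_n},\mathbb{I}_{\bar{x}_n}$, and eliminate only the hidden variables $\mathbf{H}$.

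First I would fix the elimination order: process the hidden variables in reverse topological order with respect to $\mathcal{S}$, so that $H_v$ is eliminated only after every hidden variable attached to a sum node inside the sub-SPN rooted at $v$ has been eliminated (leaves to root). The prior/CPD of each $H_v$ carries precisely the sum weights $w_{ij}$ of the corresponding sum node. The core of the argument is then a structural induction on $\mathcal{S}$: at the moment VE is about to eliminate $H_v$, the factors mentioning $H_v$ are, by the induction hypothesis, ADDs equal to the network polynomials $f_c$ of the sub-SPNs rooted at the children $c$ of $v$; multiplying these ADDs realizes the intervening product nodes, and summing out $H_v$ --- a weighted sum dictated by its CPD --- realizes the sum node $v$, producing an ADD equal to $f_v$. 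Carried up to the root, this shows the final factor equals $f_{\mathcal{S}}$, with each ADD product corresponding to a product node and each elimination to a sum node.

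To obtain the DAG structure of $\mathcal{S}$ rather than a tree unrolling, I would invoke the caching/memoization that is standard in ADD-based VE: wherever $\mathcal{S}$ reuses a sub-SPN the corresponding CPD ADDs share isomorphic subgraphs, so the intermediate factor for that sub-SPN is computed once and reused, and the \emph{history record} of the computation becomes a DAG isomorphic to $\mathcal{S}$. This is the precise sense in which $\mathcal{S}$ is ``recovered'' as a cache of VE. For the complexity bound I would charge each sum and product node of $\mathcal{S}$ a bounded number of ADD operations; since each such operation runs in time polynomial in the (bounded) sizes of the ADDs it touches and the factor $N$ accounts for variable scopes, the total cost is $O(N|\mathcal{S}|)$.

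I expect the main obstacle to be pinning down the elimination order together with the exact ADD bookkeeping so that the correspondence is genuinely node-for-node and the memoized trace is provably \emph{isomorphic} to $\mathcal{S}$ rather than merely equal to it in value. In particular, showing that the product-of-factors step neither merges nor splits structure in a way that deviates from $\mathcal{S}$'s product nodes, and that the shared ADD substructure lines up exactly with the reused sub-SPNs, is where the care lies; the value-equality part should follow more routinely from the semantics of ADD multiplication and summation combined with completeness and decomposability.
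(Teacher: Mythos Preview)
Your elimination order (reverse topological on the hidden variables) and the appeal to caching for DAG recovery are correct and match the paper. The gap is in your induction hypothesis. You claim that ``at the moment VE is about to eliminate $H_v$, the factors mentioning $H_v$ are \ldots ADDs equal to the network polynomials $f_c$ of the sub-SPNs rooted at the children $c$ of $v$.'' This is not what the factors look like. The children of the sum node $v$ correspond to the \emph{values} of $H_v$, not to distinct factors in $\Phi$; the factors in $\Phi$ are organized by (groups of) \emph{observable} variables $X$, and each such factor still has $H_v$ sitting at or near its root. In the simplest case already (one sum node $H$ over product nodes over $X_1,X_2$), the two factors are $\mathcal{A}_{X_1}$ and $\mathcal{A}_{X_2}$, each rooted at $H$; neither is the network polynomial of any child of $H$. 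Only \emph{after} multiplying all factors that mention $H_v$ does the $i$th branch under $H_v$ become $f_{c_i}$; your hypothesis has the roles of ``factor'' and ``branch of $H_v$'' confused, so the induction as stated does not go through.

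The paper does not try to make this inductive claim. Instead it gives an explicit symbolic-ADD multiplication (Alg.~\ref{alg:multiplication}) whose invariant is structural: whenever two ADDs share a root variable, that node is merged; once the roots diverge, a fresh $\otimes$ node is introduced. The key lemma is that the product $\mathcal{A}_{X_1}\otimes\mathcal{A}_{X_2}$ is (up to contracted product nodes) exactly the sub-SPN of $\mathcal{S}$ induced by $\{X_1,X_2\}$, hence bounded in size by $|\mathcal{S}|$. Correctness of the recovered SPN then follows from this structural identification rather than from a value-level induction on sum nodes, and the $O(N|\mathcal{S}|)$ bound is a counting argument: starting from $N$ ADDs and ending with one forces exactly $N-1$ multiplications, each $O(|\mathcal{S}|)$, plus $O(|\mathcal{S}|)$ total for all the sum-outs. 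If you want to salvage your inductive approach, the right invariant is not ``each factor equals some $f_c$'' but ``the current product of all factors containing $H_v$, restricted below $H_v$, is the sub-SPN of $\mathcal{S}$ induced by the union of their observable scopes''; proving that invariant essentially forces you to redo the paper's case analysis of the multiplication algorithm.
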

\begin{remark}
The combination of Theorems~\ref{thm:spn2bn} and~\ref{thm:bn2spn} shows that distributions for which SPNs allow a compact representation and efficient inference, BNs with ADDs also allow a compact representation and efficient inference (i.e., no exponential blow up). 
\end{remark}
To make the upcoming proofs concise, we first define a \emph{normal form} for SPNs and show that every complete and consistent SPN can be transformed into a normal SPN in quadratic time and space without changing the network polynomial. We then derive the proofs with normal SPNs. Note that we only focus on SPNs that are \emph{complete} and \emph{consistent}. Hence, when we refer to an SPN, we assume that it is complete and consistent without explicitly stating this.

\subsection{Normal Form}
For an SPN $\mathcal{S}$, let $f_{\mathcal{S}}(\cdot)$ be the network polynomial defined at the root of $\mathcal{S}$. Define the \emph{height} of an SPN to be the length of the longest path from the root to a terminal node. 
\begin{definition}
\label{def:normal}
An SPN is said to be normal if
\begin{enumerate}
	\item 	It is complete and decomposable.
	\item 	For each sum node in the SPN, the weights of the edges emanating from the sum node are nonnegative and sum to 1.
	\item 	Every terminal node in the SPN is a univariate distribution over a Boolean variable and the size of the scope of a sum node is at least 2 (sum nodes whose scope is of size 1 are reduced into terminal nodes).
\end{enumerate}
\end{definition}
\begin{theorem}
\label{thm:normal}
For any complete and consistent SPN $\mathcal{S}$, there exists a normal SPN $\mathcal{S}'$ such that $\Pr_{\mathcal{S}}(\cdot) = \Pr_{\mathcal{S}'}(\cdot)$ and $|\mathcal{S}'| = O(|\mathcal{S}|^2)$.
\end{theorem}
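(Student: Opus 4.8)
The plan is to establish the three defining conditions of Definition~\ref{def:normal} by three successive distribution-preserving rewrites of $\mathcal{S}$, applied in the order (i) enforce decomposability, (ii) locally normalize the weights, and (iii) collapse single-variable sub-SPNs into terminal univariate distributions. Only the first rewrite can enlarge the network, so the entire difficulty is to push consistency up to decomposability within a quadratic budget; the other two rewrites are essentially free of charge.

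For the decomposability step I would first isolate the structural fact that makes consistency \emph{almost} decomposable. Fix a product node $v$ and a variable $X$ lying in the scopes of two distinct children. Consistency forbids $\mathbb{I}_x$ below one child and $\mathbb{I}_{\bar x}$ below another, and it also forbids any single child from containing both polarities of $X$ (otherwise a sibling carrying $X$ would create a violation with one of them); hence there is a fixed literal $\ell_X\in\{\mathbb{I}_x,\mathbb{I}_{\bar x}\}$ that is the only polarity of $X$ occurring anywhere below $v$. Completeness then forces $\ell_X$ into every monomial of each child whose scope contains $X$, so that child's network polynomial factors as $f_{v_i}=\ell_X\cdot g_{v_i}$ with $\mathrm{scope}(g_{v_i})=\mathrm{scope}(v_i)\setminus\{X\}$. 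I would use this to \emph{hoist} $\ell_X$: replace each offending child $v_i$ by a node computing $g_{v_i}$ (obtained from the sub-SPN at $v_i$ by setting the leaf $\ell_X$ to the constant $1$) and attach a single fresh leaf $\ell_X$ directly to $v$. Since $\ell_X^m=\ell_X$ as a function of the indicators, the value of $v$ is unchanged while $X$ now lives in a single child of $v$; iterating over all shared variables at all product nodes yields a complete SPN with pairwise-disjoint children scopes, i.e.\ a decomposable one, computing the same polynomial. Setting a leaf to $1$ only shrinks scopes, so it preserves completeness and decomposability of the piece being copied.

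The main obstacle is to certify that hoisting stays within $O(|\mathcal{S}|^2)$ rather than compounding into an exponential blow-up: copying a child that was itself produced by copying \emph{its} children would multiply sizes along long product chains. I would try to resolve this by ordering the rewrites so that each copy is a restriction of an already-repaired, hence decomposable, sub-SPN (a restriction needs no further recursive repair), while charging each copy to a parent--child incidence of the original DAG so as to bound both the number of copies and their sizes. Reconciling these two requirements — keeping every copy decomposable without recursion yet cheap enough that the total remains quadratic — is the crux of the argument and the step I expect to demand the most care.

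The remaining two conditions are routine. For condition~2, I would compute for every node $v$ the quantity $\tau_v$, the value of the sub-SPN at $v$ with all indicators set to $1$, and rescale each sum edge by $w_{ij}\mapsto w_{ij}\tau_{v_j}/\tau_{v_i}$; a short induction shows the new weights are nonnegative, sum to $1$ at every sum node, force $\tau_v=1$ at every node, and leave $\Pr_{\mathcal{S}}(\cdot)$ unchanged, all without adding nodes. Finally, since every sub-SPN now computes a normalized distribution, any maximal sub-SPN whose scope is a single variable $X$ computes a univariate distribution $a\mathbb{I}_x+(1-a)\mathbb{I}_{\bar x}$; I would replace each such sub-SPN by one terminal node storing that distribution (reading a bare indicator as a degenerate univariate distribution), which establishes condition~3, guarantees that every surviving sum node has scope of size at least $2$, and costs only linear additional work. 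Composing the three rewrites preserves $\Pr_{\mathcal{S}}(\cdot)$ and gives $|\mathcal{S}'|=O(|\mathcal{S}|^2)$.
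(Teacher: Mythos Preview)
Your three-step plan matches the paper's proof exactly (its Lemmas~\ref{lemma:decomposability}, \ref{lemma:normalization}, \ref{lemma:dist}, in that order), including the literal-hoisting trick for decomposability and the $\tau_v$-rescaling for weight normalization. The paper resolves the quadratic-bound crux you flag by processing product nodes in inverse topological order and, at each non-decomposable $v_m$, duplicating only those descendants of $v_m$ that are also referenced from outside $\mathcal{S}_{v_m}$; each such pass adds at most $O(|\mathcal{S}|)$ nodes and edges, and there are at most $\mathfrak{V}(\mathcal{S})$ passes, giving the $O(|\mathcal{S}|^2)$ bound.
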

To show this, we first prove the following lemmas.
\begin{lemma}
\label{lemma:decomposability}
For any complete and consistent SPN $\mathcal{S}$ over $\mathbf{X}_{1:N}$, there exists a complete and decomposable SPN $\mathcal{S}'$ over $\mathbf{X}_{1:N}$ such that $f_{\mathcal{S}}(\mathbf{x}) = f_{\mathcal{S}'}(\mathbf{x}), \forall\mathbf{x}$ and $|\mathcal{S}'| = O(|\mathcal{S}|^2)$.
\end{lemma}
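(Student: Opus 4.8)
The plan is to attack non-decomposability one product node at a time, exploiting the fact that consistency tightly constrains how a shared variable may occur. First I would record two structural consequences of completeness and consistency. By completeness, for every node $v$ the network polynomial $f_v$, after collapsing repeated indicators via the identity $\mathbb{I}_x^2 = \mathbb{I}_x$, is a sum of monomials each of which contains exactly one indicator for every variable in $\text{scope}(v)$. By consistency, at a product node $v$ any variable $X$ lying in the scopes of two distinct children can occur under a single polarity only: if one child had $\mathbb{I}_{\bar x}$ among its descendants and another had $\mathbb{I}_x$, the pair would violate consistency. Combining the two, for every child $c$ of $v$ with $X \in \text{scope}(c)$ we obtain a clean factorization $f_c = \mathbb{I}_x \cdot g_c$ (or $\mathbb{I}_{\bar x}\cdot g_c$), where $g_c$ is $X$-free and is obtained from $f_c$ by setting the single surviving indicator of $X$ to $1$.

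With this in hand, the core step is a local rewrite that makes a fixed variable $X$ decomposable at a fixed product node $v$. For each child $c_i$ of $v$ whose scope contains $X$, I would replace $c_i$ by a sub-SPN computing $g_{c_i}$: copy the sub-DAG rooted at $c_i$ and replace every leaf $\mathbb{I}_x$ inside it by the constant $1$, then simplify. Because $X$ occurs under one polarity throughout that sub-DAG, this substitution deletes $X$ uniformly and therefore preserves completeness (every child of any internal sum node loses $X$ from its scope simultaneously) and the decomposability of the copied part. I would then attach a single fresh leaf $\mathbb{I}_x$ as a new child of $v$. Correctness is immediate from the factorization: the original product contributed $\mathbb{I}_x^{m}\prod_i g_{c_i}$ for some $m \ge 1$, which collapses to $\mathbb{I}_x \prod_i g_{c_i}$, exactly what the rewritten node computes, so $f_{\mathcal{S}}$ is unchanged. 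Repeating over all shared variables of $v$ and then over all product nodes yields a complete and decomposable $\mathcal{S}'$ with $f_{\mathcal{S}'}=f_{\mathcal{S}}$.

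The copying is genuinely needed, since the sub-DAG rooted at $c_i$ may be shared with other parents that must retain $X$; working on a private copy avoids corrupting them. To keep the bound at $O(|\mathcal{S}|^2)$ I would extract all $X$-free copies against a fixed reference graph and handle every shared variable of a single product node within one copy of each child, deleting all the relevant indicator leaves at once. Each product node then spawns copies of total size $O(|\mathcal{S}|)$, and since there are $O(|\mathcal{S}|)$ product nodes the final size is $O(|\mathcal{S}|^2)$.

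The step I expect to be the main obstacle is precisely this size accounting. One must argue that the rewrites do not cascade, that is, that removing shared variables at $v$ does not create new shared variables at an ancestor product node in a way that compounds multiplicatively, and that performing the extractions is self-consistent even though a child being copied may itself sit above other product nodes that are rewritten. Pinning down a processing order (bottom-up) together with the invariant that each node is copied only a bounded number of times per product ancestor is the delicate part; by contrast, the polynomial identity is a routine consequence of $\mathbb{I}_x^2=\mathbb{I}_x$ together with the factorization established in the first step.
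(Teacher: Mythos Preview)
Your approach is essentially the paper's: both exploit completeness to see that every monomial of $f_c$ carries one indicator per variable in $\text{scope}(c)$, use consistency to force a single polarity for any variable shared among children of a product node, factor via $\mathbb{I}_x^2=\mathbb{I}_x$, strip the shared indicators from the children, and reattach a single copy directly at the product node. The correctness argument and the per-product-node treatment of all shared variables at once are identical.

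The one place the paper differs is exactly the point you flag as the main obstacle. Rather than making a full private copy of each child's sub-DAG, the paper modifies the sub-SPN $\mathcal{S}_{v_m}$ \emph{in place}: it deletes the relevant indicator leaves inside $\mathcal{S}_{v_m}$ and, for each internal node $v$ of $\mathcal{S}_{v_m}$ that happens to have a parent \emph{outside} $\mathcal{S}_{v_m}$, it creates a single compensating product node $p = v \otimes \prod \mathbb{I}_{x^*}$ and reroutes those external parents to $p$. Thus the only new nodes per iteration are these compensators, at most one per node of the induced sub-SPN $\mathcal{S}_{\mathbf{V}}\subseteq\mathcal{S}_{v_m}$, giving an $O(|\mathcal{S}|)$ increase per product node and $O(|\mathcal{S}|^2)$ overall without any ``copy against a fixed reference graph'' bookkeeping. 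This selective duplication cleanly sidesteps the cascading-copy worry you raise, since no full sub-DAG is ever replicated and previously processed descendants remain shared rather than being re-instantiated.
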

\begin{proof}
Let $\mathcal{S}$ be a complete and consistent SPN. If it is also decomposable, then simply set $\mathcal{S}' = \mathcal{S}$ and we are done. Otherwise, let $v_1, \ldots, v_M$ be an inverse topological ordering of all the nodes in $\mathcal{S}$, including both terminal nodes and internal nodes, such that for any $v_m, m\in 1:M$, all the ancestors of $v_m$ in the graph appear after $v_m$ in the ordering. Let $v_m$ be the first product node in the ordering that violates decomposability. Let $v_{m_1}, v_{m_2}, \ldots, v_{m_l}$ be the children of $v_m$ where $m_1< m_2<\cdots < m_l < m$ (due to the inverse topological ordering). Let $(v_{m_i}, v_{m_j}), i< j, i, j\in 1:l$ be the first ordered pair of nodes such that $\text{scope}(v_{m_i})\bigcap\text{scope}(v_{m_j})\neq\varnothing$. Hence, let $X\in\text{scope}(v_{m_i})\bigcap\text{scope}(v_{m_j})$. Consider $f_{v_{m_i}}$ and $f_{v_{m_j}}$ which are the network polynomials defined by the sub-SPNs rooted at $v_{m_i}$ and $v_{m_j}$. 

Expand network polynomials $f_{v_{m_i}}$ and $f_{v_{m_j}}$ into a \emph{sum-of-product} form by applying the distributive law between products and sums. For example, if $f(X_1, X_2) = (\mathbb{I}_{x_1} + 9\mathbb{I}_{\bar{x}_1})(4\mathbb{I}_{x_2} + 6\mathbb{I}_{\bar{x}_2})$, then the expansion of $f$ is $f(X_1, X_2) = 4\mathbb{I}_{x_1}\mathbb{I}_{x_2} + 6\mathbb{I}_{x_1}\mathbb{I}_{\bar{x}_2} + 36\mathbb{I}_{\bar{x}_1}\mathbb{I}_{x_2} + 54\mathbb{I}_{\bar{x}_1}\mathbb{I}_{\bar{x}_2}$. Since $\mathcal{S}$ is complete, then sub-SPNs rooted at $v_{m_i}$ and $v_{m_j}$ are also complete, which means that each monomial in the expansion of $f_{v_{m_i}}$ must share the same scope.  The same applies to $f_{v_{m_j}}$. Since $X\in\text{scope}(v_{m_i})\bigcap\text{scope}(v_{m_j})$, then every monomial in the expansion of $f_{v_{m_i}}$ and $f_{v_{m_j}}$ must contain an indicator variable over $X$, either $\mathbb{I}_{x}$ or $\mathbb{I}_{\bar{x}}$. Furthermore, since $\mathcal{S}$ is consistent, then the sub-SPN rooted at $v_m$ is also consistent. Consider $f_{v_m} = \prod_{k = 1}^l f_{v_{m_k}} = f_{v_{m_i}}f_{v_{m_j}}\prod_{k\neq i,j}f_{v_{m_k}}$. Because $v_m$ is consistent, we know that each monomial in the expansions of $f_{v_{m_i}}$ and $f_{v_{m_j}}$ must contain the same indicator variable of $X$, either $\mathbb{I}_{x}$ or $\mathbb{I}_{\bar{x}}$, otherwise there will be a term $\mathbb{I}_{x}\mathbb{I}_{\bar{x}}$ in $f_{v_m}$ which violates the consistency assumption. Without loss of generality, assume each monomial in the expansions of $f_{v_{m_i}}$ and $f_{v_{m_j}}$ contains $\mathbb{I}_x$. Then we can re-factorize $f_{v_m}$ in the following way:
\begin{align}
f_{v_m} & = \prod_{k = 1}^l f_{v_{m_k}} = \mathbb{I}^2_x \frac{f_{v_{m_i}}}{\mathbb{I}_x} \frac{f_{v_{m_j}}}{\mathbb{I}_x}\prod_{k\neq i,j}f_{v_{m_k}}\nonumber\\
& = \mathbb{I}_x\frac{f_{v_{m_i}}}{\mathbb{I}_x} \frac{f_{v_{m_j}}}{\mathbb{I}_x}\prod_{k\neq i,j}f_{v_{m_k}} = \mathbb{I}_x\tilde{f}_{v_{m_i}} \tilde{f}_{v_{m_j}}\prod_{k\neq i,j}f_{v_{m_k}}\label{equ:extractout}
\end{align}
where we use the fact that indicator variables are idempotent, i.e., $\mathbb{I}^2_{x} = \mathbb{I}_{x}$ and $\tilde{f}_{v_{m_i}}(\tilde{f}_{v_{m_j}})$ is defined as the function by factorizing $\mathbb{I}_{x}$ out from $f_{v_{m_i}}(f_{v_{m_j}})$.  Eq.~\ref{equ:extractout} means that in order to make $v_m$ decomposable, we can simply remove all the indicator variables $\mathbb{I}_x$ from sub-SPNs rooted at $v_{m_i}$ and $v_{m_j}$ and later link $\mathbb{I}_x$ to $v_m$ directly. Such a transformation will not change the network polynomial $f_{v_m}$ as shown by Eq.~\ref{equ:extractout}, but it will remove $X$ from $\text{scope}(v_{m_i})\bigcap\text{scope}(v_{m_j})$. In principle, we can apply this transformation to all ordered pairs $(v_{m_i}, v_{m_j}), i < j, i, j \in 1:l$ with nonempty intersections of scope. However, this is not algorithmically efficient and more importantly, for local components containing $\mathbb{I}_x$ in $f_{v_m}$ which are reused by other nodes $v_n$ outside of $\mathcal{S}_{v_m}$, we cannot remove $\mathbb{I}_x$ from them otherwise the network polynomials for each such $v_n$ will be changed due to the removal. In such case, we need to duplicate the local components to ensure that local transformations with respect to $f_{v_m}$ do not affect network polynomials $f_{v_n}$. We present the transformation in Alg.~\ref{alg:transformation}.
\begin{algorithm}[hbt]
\centering
\caption{Decomposition Transformation}
\label{alg:transformation}
\begin{algorithmic}[1]
\REQUIRE	Complete and consistent SPN $\mathcal{S}$.
\ENSURE	Complete and decomposable SPN $\mathcal{S}'$.
\STATE	Let $v_1, v_2, \ldots, v_M$ be an inverse topological ordering of nodes in $\mathcal{S}$.
\FOR {$m = 1$ to $M$}
	\IF {$v_m$ is a non-decomposable product node}
		\STATE	$\Omega(v_m)\leftarrow \bigcup_{i\neq j}\text{scope}(v_{m_i})\bigcap\text{scope}(v_{m_j})$
		\STATE	$\mathbf{V}\leftarrow\{v\in\mathcal{S}_{v_m}~|~\text{scope}(v)\bigcap\Omega(v_m)\neq\varnothing\}$
		\STATE	$\mathcal{S}_{\mathbf{V}}\leftarrow \mathcal{S}_{v_m}|_{\mathbf{V}}$
		\STATE	$D(v_m)\leftarrow$ descendants of $v_m$
		\FOR {node $v\in\mathcal{S}_{\mathbf{V}}\backslash\{v_m\}$}
			\IF {$Pa(v)\backslash D(v_m)\neq \varnothing$}
				\STATE	Create $p\leftarrow v\otimes \prod_{X\in\Omega(v_m)\cap \text{scope}(v)}\mathbb{I}_{x^*}$
				\STATE	Connect $p$ to $\forall f\in Pa(v)\backslash D(v_m)$
				\STATE	Disconnect $v$ from $\forall f\in Pa(v)\backslash D(v_m)$
			\ENDIF
		\ENDFOR
		\FOR {node $v\in\mathcal{S}_{\mathbf{V}}$ in bottom-up order}
			\STATE	Disconnect $\tilde{v}\in Ch(v)$ $\forall \text{scope}(\tilde{v})\subseteq \Omega(v_m)$ 
		\ENDFOR
		\STATE	Connect $\prod_{X\in\Omega(v_m)}\mathbb{I}_{x^*}$ to $v_m$ directly
	\ENDIF
\ENDFOR
\STATE	Delete all nodes unreachable from the root of $\mathcal{S}$
\STATE	Delete all product nodes with out-degree 0
\STATE	Contract all product nodes with out-degree 1
\end{algorithmic}
\end{algorithm}
Alg.~\ref{alg:transformation} transforms a complete and consistent SPN $\mathcal{S}$ into a complete and decomposable SPN $\mathcal{S}'$. Informally, it works using the following identity:
\begin{equation}
\label{equ:transformation}
f_{v_m}
= \left(\prod_{X\in\Omega(v_m)}\mathbb{I}_{x^*}\right)\prod_{k=1}^l \frac{f_{v_{m_k}}}{\prod_{X\in\Omega(v_m)\cap\text{scope}(v_{m_k})}\mathbb{I}_x^*}
\end{equation}
where $\Omega(v_m)\triangleq\bigcup_{i,j\in 1:l, i\neq j}\text{scope}(v_{m_i})\cap\text{scope}(v_{m_j})$, i.e., $\Omega(v_m)$ is the union of all the shared variables between pairs of children of $v_m$ and $\mathbb{I}_{x^*}$ is the indicator variable of $X\in\Omega(v_m)$ appearing in $\mathcal{S}_{v_m}$. Based on the analysis above, we know that for each $X\in\Omega(v_m)$ there will be only one kind of indicator variable $\mathbb{I}_{x^*}$ that appears inside $\mathcal{S}_{v_m}$, otherwise $v_m$ is not consistent. In Line 6, $\mathcal{S}_{v_m}|_{\mathbf{V}}$ is defined as the sub-SPN of $\mathcal{S}_{v_m}$ induced by the node set $\mathbf{V}$, i.e., a subgraph of $\mathcal{S}_{v_m}$ where the node set is restricted to $\mathbf{V}$. In Lines 5-6, we first extract the induced sub-SPN $\mathcal{S}_{\mathbf{V}}$ from $\mathcal{S}_{v_m}$ rooted at $v_m$ using the node set in which nodes have nonempty intersections with $\Omega(v_m)$. We disconnect the nodes in $\mathcal{S}_{\mathbf{V}}$ from their children if their children are indicator variables of a subset of $\Omega(v_m)$ (Lines 15-17). At Line 18, we build a new product node by multiplying all the indicator variables in $\Omega(v_m)$ and link it to $v_m$ directly. To keep unchanged the network polynomials of nodes outside $\mathcal{S}_{v_m}$ that use nodes in $\mathcal{S}_{\mathbf{V}}$, we create a duplicate node $p$ for each such node $v$ and link $p$ to all the parents of $v$ outside of $\mathcal{S}_{v_m}$ and at the same time delete the original link (Lines 9-13).

In summary, Lines 15-17 ensure that $v_m$ is decomposable by removing all the shared indicator variables in $\Omega(v_m)$. Line 18 together with Eq.~\ref{equ:transformation} guarantee that $f_{v_m}$ is unchanged after the transformation. Lines 9-13 create necessary duplicates to ensure that other network polynomials are not affected. Lines 21-23 simplify the transformed SPN to make it more compact. An example is depicted in Fig.~\ref{fig:transformation} to illustrate the transformation process.
\begin{figure}[htb]
\centering
    \subfigure {
        \begin{minipage}[b]{0.45\linewidth}
            \centering
            \includegraphics[width=\textwidth]{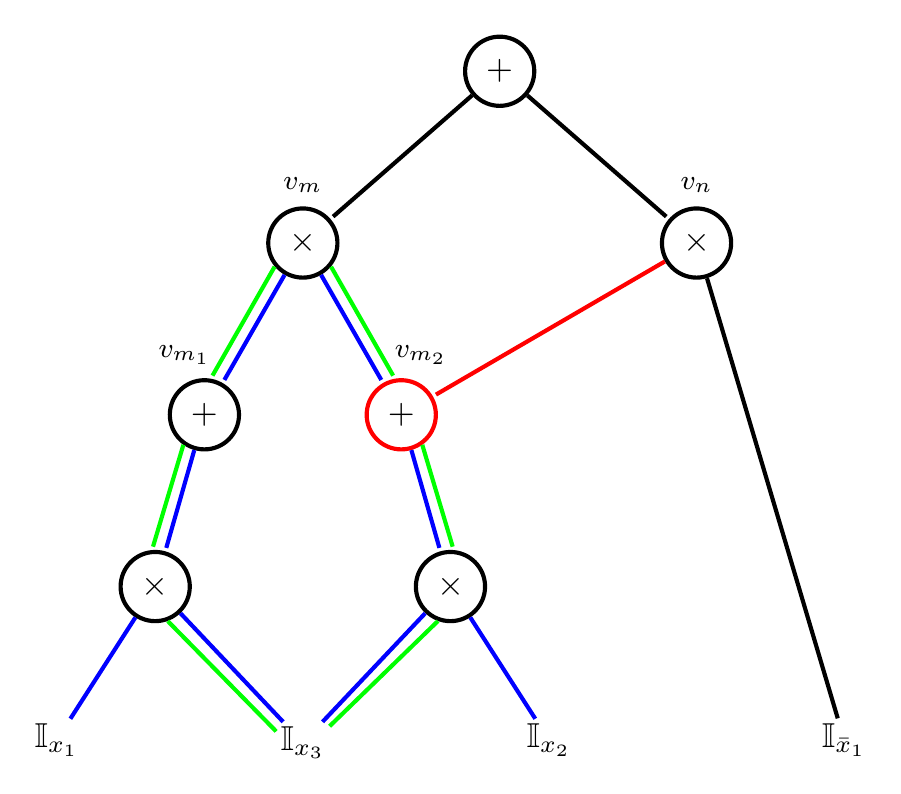}     
        \end{minipage}
        \label{fig:transform-a}
    }
    ~
    \subfigure {
        \begin{minipage}[b]{0.45\linewidth}
            \includegraphics[width=\textwidth]{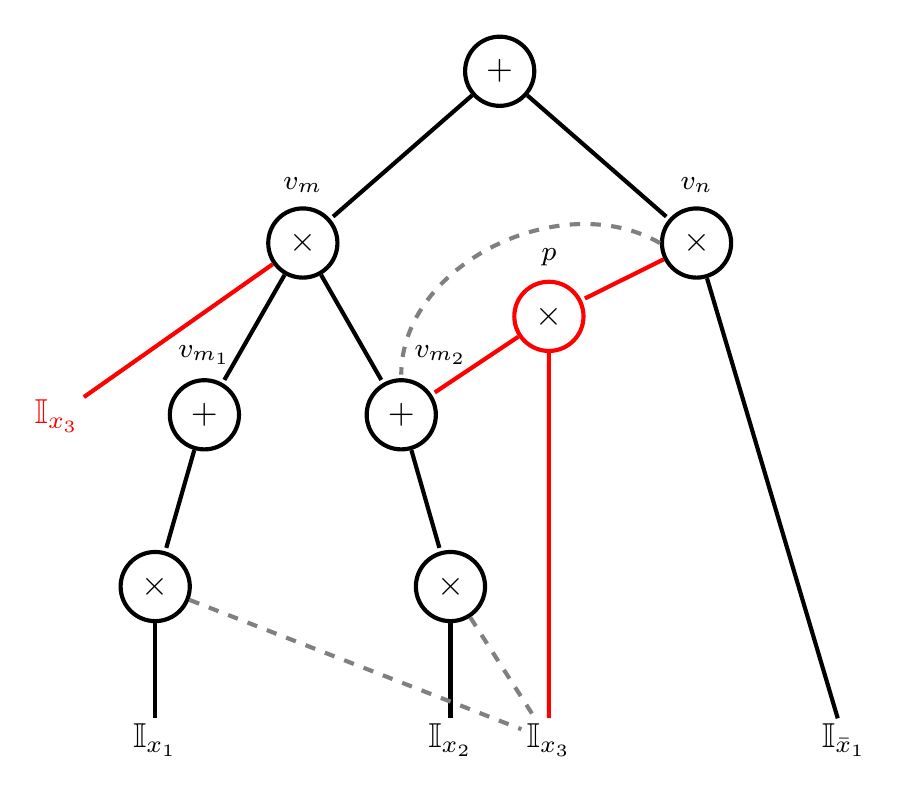}
        \end{minipage}
        \label{fig:transform-b}
    }
\caption{Transformation process described in Alg.~\ref{alg:transformation} to construct a complete and decomposable SPN from a complete and consistent SPN. The product node $v_m$ in the left SPN is not decomposable. Induced sub-SPN $\mathcal{S}_{v_m}$ is highlighted in blue and $\mathcal{S}_{\mathbf{V}}$ is highlighted in green. $v_{m_2}$ highlighted in red is reused by $v_n$ which is outside $\mathcal{S}_{v_m}$. To compensate for $v_{m_2}$, we create a new product node $p$ in the right SPN and connect it to indicator variable $\mathbb{I}_{x_3}$ and $v_{m_2}$. Dashed gray lines in the right SPN denote deleted edges and nodes while red edges and nodes are added during Alg.~\ref{alg:transformation}.}
\label{fig:transformation}
\end{figure}

We now analyze the size of the SPN constructed by Alg.~\ref{alg:transformation}. For a graph $\mathcal{S}$, let $\mathfrak{V}(\mathcal{S})$ be the number of nodes in $\mathcal{S}$ and let $\mathfrak{E}(\mathcal{S})$ be the number of edges in $\mathcal{S}$. Note that in Lines 8-17 we only focus on nodes that appear in the induced SPN $\mathcal{S}_{\mathbf{V}}$, which clearly has $|\mathcal{S}_{\mathbf{V}}| \leq |\mathcal{S}_{v_m}|$. Furthermore, we create a new product node $p$ at Line 10 iff $v$ is reused by other nodes which do not appear in $\mathcal{S}_{v_m}$. This means that the number of nodes created during each iteration between Lines 2 and 20 is bounded by $\mathfrak{V}(\mathcal{S}_{\mathbf{V}})\leq \mathfrak{V}(\mathcal{S}_{v_m})$. Line 10 also creates 2 new edges to connect $p$ to $v$ and the indicator variables. Lines 11 and 12 first connect edges to $p$ and then delete edges from $v$, hence these two steps do not yield increases in the number of edges. So the increase in the number of edges is bounded by $2\mathfrak{V}(\mathcal{S}_{\mathbf{V}})\leq 2\mathfrak{V}(\mathcal{S}_{v_m})$. Combining increases in both nodes and edges, during each outer iteration the increase in size is bounded by $3|\mathcal{S}_{\mathbf{V}}| \leq 3|\mathcal{S}_{v_m}| = O(|\mathcal{S}|)$. There will be at most $M = \mathfrak{V}(\mathcal{S})$ outer iterations hence the total increase in size will be bounded by $O(M|\mathcal{S}|) = O(|\mathcal{S}|^2)$.
\end{proof}
\begin{lemma}
\label{lemma:one}
For any complete and decomposable SPN $\mathcal{S}$ over $\mathbf{X}_{1:N}$ that satisfies condition 2 of Def.~\ref{def:normal}, $\sum_{\mathbf{x}}f_{\mathcal{S}}(\mathbf{x}) = 1$.
\end{lemma}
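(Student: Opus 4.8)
The plan is to prove, by structural induction over the DAG (equivalently, by induction on the height of the sub-SPN $\mathcal{S}_v$), the stronger local statement that for \emph{every} node $v$ of $\mathcal{S}$ we have $\sum_{\mathbf{x}|_{\text{scope}(v)}} f_v(\mathbf{x}|_{\text{scope}(v)}) = 1$, where the sum ranges over all joint assignments to the variables in $\text{scope}(v)$. The lemma then follows by applying this at the root: since $\mathcal{S}$ is an SPN over $\mathbf{X}_{1:N}$, the scope of the root is all of $\mathbf{X}_{1:N}$, so the statement specializes to $\sum_{\mathbf{x}} f_{\mathcal{S}}(\mathbf{x}) = 1$. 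A preliminary observation I would record first is that, by the definition of scope, the network polynomial $f_v$ contains indicator variables only for members of $\text{scope}(v)$; hence $f_v$ is a well-defined function of the assignment $\mathbf{x}|_{\text{scope}(v)}$ and the displayed sum is meaningful. For the base case, a terminal node is an indicator $\mathbb{I}_{x}$ over a single variable $X$ with $\text{scope}=\{X\}$, and summing over the two values of $X$ gives $\mathbb{I}_{x}|_{X=x} + \mathbb{I}_{x}|_{X=\bar{x}} = 1 + 0 = 1$.

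For the inductive step I would split on node type. If $v$ is a sum node with children $c_1,\dots,c_k$ and weights $w_1,\dots,w_k$, completeness guarantees $\text{scope}(c_i) = \text{scope}(v)$ for all $i$, so $f_v = \sum_i w_i f_{c_i}$ and
\[
\sum_{\mathbf{x}|_{\text{scope}(v)}} f_v = \sum_{i=1}^k w_i \sum_{\mathbf{x}|_{\text{scope}(c_i)}} f_{c_i} = \sum_{i=1}^k w_i = 1,
\]
using the induction hypothesis on each child and condition 2 of Def.~\ref{def:normal}. If $v$ is a product node with children $c_1,\dots,c_k$, decomposability guarantees that $\text{scope}(v)$ is the \emph{disjoint} union of the $\text{scope}(c_i)$, and $f_v = \prod_i f_{c_i}$ with each factor depending only on its own scope; the sum over assignments to $\text{scope}(v)$ therefore factorizes,
\[
\sum_{\mathbf{x}|_{\text{scope}(v)}} \prod_{i=1}^k f_{c_i} = \prod_{i=1}^k \sum_{\mathbf{x}|_{\text{scope}(c_i)}} f_{c_i} = \prod_{i=1}^k 1 = 1,
\]
again by the induction hypothesis.

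The step I expect to require the most care is the factorization at product nodes: I must justify interchanging the single sum over all configurations of $\text{scope}(v)$ with the product, which relies precisely on decomposability (the children's scopes are pairwise disjoint and exhaust $\text{scope}(v)$) together with the preliminary observation that each $f_{c_i}$ involves no indicator outside $\text{scope}(c_i)$. Concretely, because summing over $\mathbf{x}|_{\text{scope}(v)}$ amounts to summing independently over each block $\mathbf{x}|_{\text{scope}(c_i)}$, and $f_{c_i}$ is constant in the blocks $j\neq i$, the generalized distributive law yields the product of the per-child sums. Once this is in hand, the terminal and sum-node cases are immediate, and evaluating the induction at the root completes the proof.
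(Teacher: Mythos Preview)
Your proposal is correct and takes essentially the same approach as the paper: induction on the height of the SPN, with the base case handled by indicator variables summing to $1$, the sum-node case handled by completeness together with condition~2, and the product-node case handled by decomposability allowing the sum to factorize. Your version is slightly more careful in isolating the ``$f_v$ depends only on $\text{scope}(v)$'' observation and in justifying the distributive step at product nodes, but the structure and key ideas match the paper's proof exactly.
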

\begin{proof}
We give a proof by induction on the height of $\mathcal{S}$. Let $R$ be the root of $\mathcal{S}$.
\begin{itemize}
	\item 	Base case. SPNs of height 0 are indicator variables over some Boolean variable whose network polynomials immediately satisfy Lemma~\ref{lemma:one}. 
	\item 	Induction step. Assume Lemma~\ref{lemma:one} holds for any SPN with height $\leq k$. Consider an SPN $\mathcal{S}$ with height $k+1$. We consider the following two cases:
		\begin{itemize}
			\item 	The root $R$ of $\mathcal{S}$ is a product node. Then in this case the network polynomial $f_{\mathcal{S}}(\cdot)$ for $\mathcal{S}$ is defined as $f_{\mathcal{S}} = \prod_{v\in Ch(R)}f_v$. We have 
\begin{align}
\sum_{\mathbf{x}}f_{\mathcal{S}}(\mathbf{x}) & = \sum_{\mathbf{x}}\prod_{v\in Ch(R)}f_v(\mathbf{x}|_{\text{scope}(v)}) \label{equ:h}\\
& = \prod_{v\in Ch(R)}\sum_{\mathbf{x}|_{\text{scope}(v)}}f_{v}(\mathbf{x}|_{\text{scope}(v)}) \label{equ:h2}\\ 
& = \prod_{v\in Ch(R)}1 = 1 \label{equ:h3}
\end{align}
where $\mathbf{x}|_{\text{scope}(v)}$ means that $\mathbf{x}$ is restricted to the set $\text{scope}(v)$. Eq.~\ref{equ:h2} follows from the decomposability of $R$ and Eq.~\ref{equ:h3} follows from the induction hypothesis. 
			\item 	The root $R$ of $\mathcal{S}$ is a sum node. The network polynomial is $f_{\mathcal{S}} = \sum_{v\in Ch(R)}w_{R, v}f_{v}$. We have 
\begin{align}
\sum_{\mathbf{x}} f_{\mathcal{S}}(\mathbf{x}) & = \sum_{\mathbf{x}}\sum_{v\in Ch(R)}w_{R, v}f_{v}(\mathbf{x}) \label{equ:g} \\
& = \sum_{v\in Ch(R)}w_{R, v}\sum_{\mathbf{x}}f_{v}(\mathbf{x}) \label{equ:g2} \\
& = \sum_{v\in Ch(R)}w_{R, v} = 1 \label{equ:g3}
\end{align}
Eq.~\ref{equ:g2} follows from the commutative and associative law of addition and Eq.~\ref{equ:g3} follows by the induction hypothesis. 
		\end{itemize}
\end{itemize}
\end{proof}
\begin{corollary}
\label{coro:equal}
For any complete and decomposable SPN $\mathcal{S}$ over $\mathbf{X}_{1:N}$ that satisfies condition 2 of Def.~\ref{def:normal}, $\Pr_{\mathcal{S}}(\cdot) = f_{\mathcal{S}}(\cdot)$.
\end{corollary}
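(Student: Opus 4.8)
The plan is to read off the claim directly from the definition of the induced distribution together with the normalization identity just established. By definition, the probability distribution induced by $\mathcal{S}$ is
\begin{equation}
\Pr_{\mathcal{S}}(\mathbf{x}) = \frac{f_{\mathcal{S}}(\mathbf{x})}{\sum_{\mathbf{x}} f_{\mathcal{S}}(\mathbf{x})},
\end{equation}
so the only quantity to control is the normalizing constant in the denominator. Since the hypotheses of the corollary coincide exactly with those of Lemma~\ref{lemma:one} --- completeness, decomposability, and condition 2 of Definition~\ref{def:normal} --- I would simply invoke that lemma to conclude $\sum_{\mathbf{x}} f_{\mathcal{S}}(\mathbf{x}) = 1$. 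Substituting this value makes the denominator disappear and yields $\Pr_{\mathcal{S}}(\mathbf{x}) = f_{\mathcal{S}}(\mathbf{x})$ for every $\mathbf{x}$, which is the claim.

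There is no real obstacle here: the corollary is an immediate consequence of Lemma~\ref{lemma:one}, and the entire substance of the argument has already been discharged in proving that lemma by induction on the height of $\mathcal{S}$. The one point worth emphasizing is why the hypotheses are needed and cannot be dropped. Decomposability is what lets the summation pass through a product node as a product of independent marginal sums (Eq.~\ref{equ:h2}), and condition 2 --- nonnegative sum weights that total 1 --- is what forces each weighted combination at a sum node to integrate to 1 (Eq.~\ref{equ:g3}); without either property the denominator could be an arbitrary positive constant and the renormalization would be nontrivial.

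Conceptually, I would present this corollary as the payoff of the normal form: it states that for a locally normalized complete and decomposable SPN the network polynomial is \emph{already} a normalized distribution, so the values propagated through the network can be interpreted directly as probabilities with no global rescaling. This is precisely the property that lets us attach clean probabilistic semantics to each node when converting $\mathcal{S}$ into a bipartite BN, and it is the reason condition 2 is built into Definition~\ref{def:normal}.
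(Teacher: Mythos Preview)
Your proposal is correct and matches the paper's treatment: the corollary is stated immediately after Lemma~\ref{lemma:one} with no separate proof, precisely because it follows at once from the definition $\Pr_{\mathcal{S}}(\mathbf{x}) = f_{\mathcal{S}}(\mathbf{x})/\sum_{\mathbf{x}} f_{\mathcal{S}}(\mathbf{x})$ together with the lemma's conclusion that the denominator equals~1. Your added discussion of why decomposability and condition~2 are each needed is accurate and a nice elaboration, but the core argument is identical.
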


\begin{lemma}
\label{lemma:normalization}
For any complete and decomposable SPN $\mathcal{S}$, there exists an SPN $\mathcal{S}'$ where the weights of the edges emanating from every sum node are nonnegative and sum to 1, and $\Pr_{\mathcal{S}}(\cdot) = \Pr_{\mathcal{S}'}(\cdot)$, $|\mathcal{S}'| = |\mathcal{S}|$.
\end{lemma}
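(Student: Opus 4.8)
The plan is to leave the \emph{graph structure} of $\mathcal{S}$ completely untouched and only rescale the edge weights, which makes $|\mathcal{S}'| = |\mathcal{S}|$ immediate. The key device is to attach to every node $v$ a normalization constant $Z_v \triangleq \sum_{\mathbf{x}|_{\text{scope}(v)}} f_v(\mathbf{x}|_{\text{scope}(v)})$, i.e. the total mass of the sub-SPN rooted at $v$ over its own scope. I would compute all the $Z_v$ in a single bottom-up (inverse topological) sweep using three recurrences: $Z_v = 1$ for a terminal indicator (summing $\mathbb{I}_x$ over the two values of its variable gives $1$); $Z_v = \prod_{c\in Ch(v)} Z_c$ for a product node (by decomposability the children have disjoint scopes, so the summation factorizes exactly as in Eq.~\ref{equ:h2}); and $Z_v = \sum_{c\in Ch(v)} w_{v,c} Z_c$ for a sum node (by linearity of the summation).

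With the $Z_v$ in hand, I would define $\mathcal{S}'$ to have, at each sum node $v$, the new weights $w'_{v,c} = w_{v,c}\, Z_c / Z_v$. A one-line computation gives $\sum_c w'_{v,c} = (1/Z_v)\sum_c w_{v,c} Z_c = Z_v/Z_v = 1$, while nonnegativity is inherited from $w_{v,c}, Z_c \ge 0$; hence condition~2 of Def.~\ref{def:normal} holds for $\mathcal{S}'$. Since only weights change, $\mathcal{S}'$ remains complete and decomposable with the same node and edge counts.

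The crux is to verify that the induced distribution is unchanged, which I would establish through the invariant $g_v = f_v / Z_v$ for every node $v$, where $g_v$ denotes the network polynomial computed at $v$ under the new weights. This follows by bottom-up induction: terminals are untouched and have $Z_v = 1$; for a product node $g_v = \prod_c g_c = \prod_c f_c/Z_c = f_v/Z_v$; and for a sum node $g_v = \sum_c w'_{v,c} g_c = (1/Z_v)\sum_c w_{v,c} Z_c (f_c/Z_c) = f_v/Z_v$. Applying the invariant at the root $R$ yields $g_R = f_{\mathcal{S}}/Z_R$, which sums to $1$ over all $\mathbf{x}$. Since $\mathcal{S}'$ is complete, decomposable and now satisfies condition~2 of Def.~\ref{def:normal}, Corollary~\ref{coro:equal} gives $\Pr_{\mathcal{S}'}(\cdot) = g_R = f_{\mathcal{S}}/Z_R$; and because $\Pr_{\mathcal{S}}(\mathbf{x}) = f_{\mathcal{S}}(\mathbf{x})/\sum_{\mathbf{x}'}f_{\mathcal{S}}(\mathbf{x}') = f_{\mathcal{S}}(\mathbf{x})/Z_R$, the two distributions coincide.

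The main obstacle I anticipate is \emph{well-definedness}: the reweighting divides by $Z_v$, so I must guarantee $Z_v > 0$ at every sum node. The recurrences show $Z_v > 0$ for terminals and that positivity propagates through product nodes, so $Z_v = 0$ at a sum node can occur only when every outgoing weight is $0$, i.e. when the node is dead and $f_v \equiv 0$. I would dispose of this by assuming without loss of generality that all edge weights are strictly positive, which is the nondegenerate case in which $\mathcal{S}$ represents a genuine distribution; under this assumption every $Z_v$ is strictly positive and the construction is fully defined, preserving $|\mathcal{S}'| = |\mathcal{S}|$ exactly.
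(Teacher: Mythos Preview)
Your proposal is correct and essentially identical to the paper's proof: your $Z_v$ is precisely the paper's $val(v)$ (the paper even notes $val(v) = \sum_{\mathbf{x}|_{\text{scope}(v)}} f_v(\mathbf{x}|_{\text{scope}(v)})$), and your reweighting $w'_{v,c} = w_{v,c}Z_c/Z_v$ is exactly Line~7 of the paper's Alg.~\ref{alg:normalization}. The only cosmetic difference is that you prove the node-level invariant $g_v = f_v/Z_v$ directly by bottom-up induction, whereas the paper phrases the same computation as an induction on the height of $\mathcal{S}$; your explicit treatment of the $Z_v = 0$ degeneracy is a point the paper leaves implicit.
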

\begin{proof}
Alg.~\ref{alg:normalization} runs in one pass of $\mathcal{S}$ to construct the required SPN $\mathcal{S}'$.
\begin{algorithm}[htb]
\centering
\caption{Weight Normalization}
\label{alg:normalization}
\begin{algorithmic}[1]
\REQUIRE	SPN $\mathcal{S}$
\ENSURE	SPN $\mathcal{S}'$
\STATE	$\mathcal{S}'\leftarrow \mathcal{S}$
\STATE	 $val(\mathbb{I}_x)\leftarrow 1, \forall\mathbb{I}_x\in\mathcal{S}$
\STATE	Let $v_1, \ldots, v_M$ be an inverse topological ordering of the nodes in $\mathcal{S}$
\FOR {$m = 1$ to $M$}
	\IF {$v_m$ is a sum node}
		\STATE	$val(v_m)\leftarrow \sum_{v\in Ch(v_m)}w_{v_m, v}val(v)$
		\STATE	$w'_{v_m, v}\leftarrow \frac{w_{v_m, v}val(v)}{val(v_m)}, \quad\forall v\in Ch(v_m)$
	\ELSIF {$v_m$ is a product node}
		\STATE	$val(v_m)\leftarrow \prod_{v\in Ch(v_m)}val(v)$	
	\ENDIF
\ENDFOR
\end{algorithmic}
\end{algorithm}
We proceed to prove that the SPN $\mathcal{S}'$ returned by Alg.~\ref{alg:normalization} satisfies $\Pr_{\mathcal{S}'}(\cdot) = \Pr_{\mathcal{S}}(\cdot)$, $|\mathcal{S}'| = |\mathcal{S}|$ and that $\mathcal{S}'$ satisfies condition 2 of Def.~\ref{def:normal}. It is clear that $|\mathcal{S}'| = |\mathcal{S}|$ because we only modify the weights of $\mathcal{S}$ to construct $\mathcal{S}'$ at Line 7. Based on Lines 6 and 7, it is also straightforward to verify that for each sum node $v$ in $\mathcal{S}'$, the weights of the edges emanating from $v$ are nonnegative and sum to 1. We now show that $\Pr_{\mathcal{S}'}(\cdot) = \Pr_{\mathcal{S}}(\cdot)$. Using Corollary~\ref{coro:equal}, $\Pr_{\mathcal{S}'}(\cdot) = f_{\mathcal{S}'}(\cdot)$. Hence it is sufficient to show that $f_{\mathcal{S}'}(\cdot) = \Pr_{\mathcal{S}}(\cdot)$. Before deriving a proof, it is helpful to note that for each node $v\in\mathcal{S}$, $val(v) = \sum_{\mathbf{x}|_{\text{scope}(v)}}f_v(\mathbf{x}|_{\text{scope}(v)})$. We give a proof by induction on the height of $\mathcal{S}$.
\begin{itemize}
	\item 	Base case. SPNs with height 0 are indicator variables which automatically satisfy Lemma~\ref{lemma:normalization}. 
	\item 	Induction step. Assume Lemma~\ref{lemma:normalization} holds for any SPN of height $\leq k$. Consider an SPN $\mathcal{S}$ of height $k+1$. Let $R$ be the root node of $\mathcal{S}$ with out-degree $l$. We discuss the following two cases.
	\begin{itemize}
		\item 	$R$ is a product node. Let $R_1,\ldots, R_{l}$ be the children of $R$ and $\mathcal{S}_{1},\ldots, \mathcal{S}_{l}$ be the corresponding sub-SPNs. By induction, Alg.~\ref{alg:normalization} returns $\mathcal{S}'_{1},\ldots,\mathcal{S}'_{l}$ that satisfy Lemma~\ref{lemma:normalization}. Since $R$ is a product node, we have
\begin{align}
f_{\mathcal{S}'}(\mathbf{x}) & = \prod_{i=1}^l f_{\mathcal{S}'_i}(\mathbf{x}|_{\text{scope}(R_i)}) \\ 
& = \prod_{i=1}^l\Pr_{\mathcal{S}_i}(\mathbf{x}|_{\text{scope}(R_i)}) \label{equ:9}\\
& = \prod_{i=1}^l \frac{f_{\mathcal{S}_i}(\mathbf{x}|_{\text{scope}(R_i)})}{\sum_{\mathbf{x}|_{\text{scope}(R_i)}}f_{\mathcal{S}_i}(\mathbf{x}|_{\text{scope}(R_i)})} \label{equ:10}\\
& = \frac{\prod_{i=1}^l f_{\mathcal{S}_i}(\mathbf{x}|_{\text{scope}(R_i)})}{\sum_{\mathbf{x}}\prod_{i=1}^l f_{\mathcal{S}_i}(\mathbf{x}|_{\text{scope}(R_i)})} \label{equ:11}\\
& = \frac{f_{\mathcal{S}}(\mathbf{x})}{\sum_{\mathbf{x}}f_{\mathcal{S}}(\mathbf{x})} = \Pr_{\mathcal{S}}(\mathbf{x}) \label{equ:12}
\end{align}
Eq.~\ref{equ:9} follows from the induction hypothesis and Eq.~\ref{equ:11} follows from the distributive law due to the decomposability of $\mathcal{S}$. 
		\item 	$R$ is a sum node with weights $w_1, \ldots, w_{l}\geq 0$. We have
\begin{align}
f_{\mathcal{S}'}(\mathbf{x}) & = \sum_{i=1}^l w'_i f_{\mathcal{S}'_i}(\mathbf{x}) \\
& = \sum_{i=1}^l \frac{w_i val(R_i)}{\sum_{j=1}^l w_j val(R_j)}\Pr_{\mathcal{S}_i}(\mathbf{x}) \label{equ:13} \\
& = \sum_{i=1}^l \frac{w_i val(R_i)}{\sum_{j=1}^l w_j val(R_j)}\frac{f_{\mathcal{S}_i}(\mathbf{x})}{\sum_{\mathbf{x}}f_{\mathcal{S}_i}(\mathbf{x})} \label{equ:14}\\
& = \sum_{i=1}^l \frac{w_i val(R_i)}{\sum_{j=1}^l w_j val(R_j)}\frac{f_{\mathcal{S}_i}(\mathbf{x})}{val(R_i)} \label{equ:15}\\
& = \frac{\sum_{i=1}^l w_i f_{\mathcal{S}_i}(\mathbf{x})}{\sum_{j=1}^l w_j val(R_j)} = \frac{f_{\mathcal{S}}(\mathbf{x})}{\sum_{\mathbf{x}}f_{\mathcal{S}}(\mathbf{x})} \label{equ:16}\\
& = \Pr_{\mathcal{S}}(\mathbf{x})
\end{align}
where Eqn.~\ref{equ:13} follows from the induction hypothesis, Eq.~\ref{equ:15} and~\ref{equ:16} follow from the fact that $val(v) = \sum_{\mathbf{x}|_{\text{scope}(v)}}f_v(\mathbf{x}|_{\text{scope}(v)}), \forall v\in\mathcal{S}$. 
	\end{itemize}
\end{itemize}
This completes the proof since $\Pr_{\mathcal{S}'}(\cdot) = f_{\mathcal{S}'}(\cdot) = \Pr_{\mathcal{S}}(\cdot)$.
\end{proof}
Given a complete and decomposable SPN $\mathcal{S}$, we now construct and show that the last condition in Def.~\ref{def:normal} can be satisfied in time and space $O(\mathcal{|\mathcal{S}|})$.
\begin{lemma}
\label{lemma:dist}
Given a complete and decomposable SPN $\mathcal{S}$, there exists an SPN $\mathcal{S}'$ satisfying condition 3 in Def.~\ref{def:normal} such that $\Pr_{\mathcal{S}'}(\cdot) = \Pr_{\mathcal{S}}(\cdot)$ and $|\mathcal{S}'| = O(|\mathcal{S}|)$.
\begin{proof}
We give a proof by construction. First, if $\mathcal{S}$ is not weight normalized, apply Alg.~\ref{alg:normalization} to normalize the weights (i.e., the weights of the edges emanating from each sum node sum to 1).

Now check each sum node $v$ in $\mathcal{S}$ in a bottom-up order. If $|\text{scope}(v)| = 1$, by Corollary~\ref{coro:equal} we know the network polynomial $f_v$ is a probability distribution over its scope, say, $\{X\}$. Reduce $v$ into a terminal node which is a distribution over $X$ induced by its network polynomial and disconnect $v$ from all its children. The last step is to remove all the unreachable nodes from $\mathcal{S}$ to obtain $\mathcal{S}'$. Note that in this step we will only decrease the size of $\mathcal{S}$, hence $|\mathcal{S}'| = O(|\mathcal{S}|)$. 
\end{proof}
\end{lemma}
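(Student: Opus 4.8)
The plan is to give a direct constructive argument in two stages, reusing the machinery already established for conditions 1 and 2. Since $\mathcal{S}$ is assumed complete and decomposable, I first ensure it is weight normalized: if it is not, I apply Alg.~\ref{alg:normalization} (Lemma~\ref{lemma:normalization}), which produces an equivalent SPN whose sum-node weights are nonnegative and sum to $1$, preserves the induced distribution, and leaves the size unchanged. This preliminary step is not cosmetic --- it is exactly what lets me invoke Corollary~\ref{coro:equal}, which guarantees that for every node $v$ the network polynomial $f_v$ already equals the probability distribution of the sub-SPN rooted at $v$, with $\sum_{\mathbf{x}|_{\text{scope}(v)}} f_v = 1$.

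The second stage enforces condition 3. I traverse the sum nodes in bottom-up (inverse topological) order, and whenever I encounter a sum node $v$ with $|\text{scope}(v)| = 1$, say $\text{scope}(v) = \{X\}$, I collapse the entire sub-SPN rooted at $v$ into a single terminal node. By completeness all children of $v$ have scope $\{X\}$, so $f_v$ is a univariate polynomial in $\mathbb{I}_x, \mathbb{I}_{\bar x}$, and by Corollary~\ref{coro:equal} it is a genuine univariate distribution over $X$; I label the new terminal node with precisely this distribution and disconnect $v$ from its children. The key correctness claim is that this replacement leaves $f_v$ unchanged, and since the network polynomial of any ancestor --- and in particular of the root --- is computed recursively from the $f_v$ of its children, every such polynomial is preserved; hence $f_{\mathcal{S}'} = f_{\mathcal{S}}$ and therefore $\Pr_{\mathcal{S}'} = \Pr_{\mathcal{S}}$. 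Because the rewrite is purely local to $f_v$, it remains correct even when the sub-SPN is reused by several parents: each sharing parent continues to see the same function.

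After all scope-$1$ sum nodes have been collapsed, no sum node has scope of size $1$, giving the second half of condition 3. For the first half, the only terminal nodes that remain are the new univariate-distribution nodes together with the bare indicators that survive as children of product nodes; I observe that an indicator $\mathbb{I}_x$ is itself a (deterministic) univariate distribution over $X$, assigning $\Pr(X = x) = 1$, so every terminal node is a univariate distribution as required. Finally I delete all nodes that have become unreachable from the root. For the size bound, normalization preserves $|\mathcal{S}|$, each collapse replaces a sub-SPN by a single node, and the cleanup only removes nodes and edges; thus the size never increases and $|\mathcal{S}'| \le |\mathcal{S}| = O(|\mathcal{S}|)$.

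I expect the only genuinely delicate point to be the justification that a \emph{local} rewrite of $f_v$ yields a \emph{globally} equivalent SPN, together with the reason weight normalization must precede it. Without normalization, $f_v$ would sum to some $Z = \sum_{\mathbf{x}|_{\text{scope}(v)}} f_v \neq 1$; replacing the sub-SPN by a normalized terminal distribution would implicitly divide out $Z$ and thereby rescale every ancestor polynomial, corrupting the root polynomial. Normalizing first makes $Z = 1$, so the terminal distribution coincides with $f_v$ exactly and no compensating factor is needed. The remaining bookkeeping --- the bottom-up ordering, the handling of shared sub-SPNs, and the unreachable-node cleanup --- is routine once this invariant is in place.
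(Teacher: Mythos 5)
Your proposal is correct and follows essentially the same route as the paper's own proof: weight-normalize via Alg.~\ref{alg:normalization} so that Corollary~\ref{coro:equal} applies, then reduce each scope-$1$ sum node in bottom-up order to a terminal node carrying the distribution $f_v$, and finally delete unreachable nodes, noting the size can only decrease. Your additional remarks --- that normalization is what makes $f_v$ itself a distribution so the local rewrite needs no compensating factor, that sharing is handled because each parent sees the same unchanged $f_v$, and that surviving indicator leaves are degenerate univariate distributions --- merely make explicit details the paper leaves implicit.
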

\begin{proof}[Proof of Thm.~\ref{thm:normal}]
The combination of Lemma~\ref{lemma:decomposability},~\ref{lemma:normalization} and~\ref{lemma:dist} completes the proof of Thm.~\ref{thm:normal}. 
\end{proof}
An example of a normal SPN constructed from the SPN in Fig.~\ref{fig:spn-example} is depicted in Fig.~\ref{fig:spn-normal}.
\begin{figure}[htb]
\centering	
	\includegraphics[width=\linewidth]{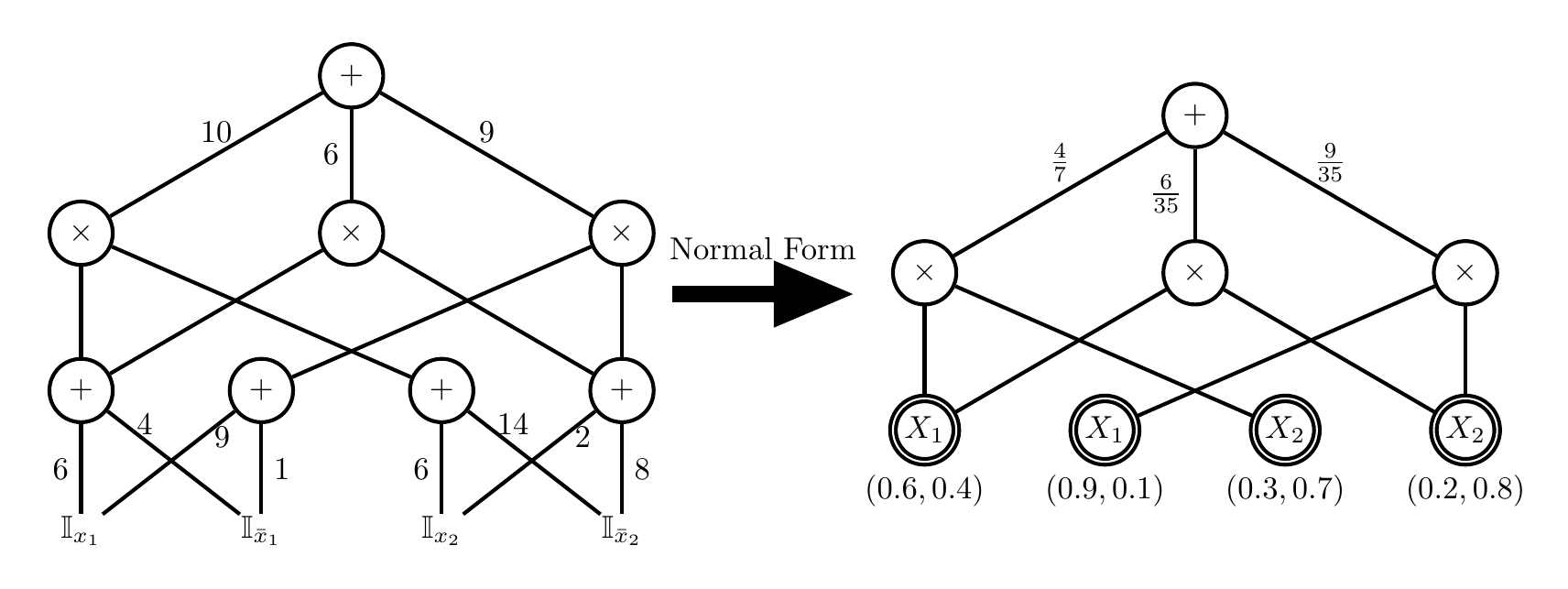}
\caption{Transform an SPN into a normal form. Terminal nodes which are probability distributions over a single variable are represented by a double-circle.}
\label{fig:spn-normal}
\end{figure}

\subsection{SPN to BN}
In order to construct a BN from an SPN, we require the SPN to be in a normal form, otherwise we can first transform it into a normal form using Alg.~\ref{alg:transformation} and~\ref{alg:normalization}.

Let $\mathcal{S}$ be a normal SPN over $\mathbf{X}_{1:N}$. Before showing how to construct a corresponding BN, we first give some intuitions. One useful view is to associate each sum node in an SPN with a hidden variable. For example, consider a sum node $v\in\mathcal{S}$ with out-degree $l$. Since $\mathcal{S}$ is normal, we have $\sum_{i=1}^l w_i = 1$ and $w_i\geq 0, \forall i\in 1:l$. This naturally suggests that we can associate a hidden discrete random variable $H_v$ with multinomial distribution $\Pr_v(H_v = i) = w_i, i\in 1:l$ for each sum node $v\in\mathcal{S}$. Therefore, $\mathcal{S}$ can be thought as defining a joint probability distribution over $\mathbf{X}_{1:N}$ and $\mathbf{H} = \{H_v~|~v\in\mathcal{S}, v\text{ is a sum node}\}$ where $\mathbf{X}_{1:N}$ are the observable variables and $\mathbf{H}$ are the hidden variables. When doing inference with an SPN, we implicitly sum out all the hidden variables $\mathbf{H}$ and compute $\Pr_{\mathcal{S}}(\mathbf{x}) = \sum_{\mathbf{h}}\Pr_{\mathcal{S}}(\mathbf{x}, \mathbf{h})$. Associating each sum node in an SPN with a hidden variable not only gives us a conceptual understanding of the probability distribution defined by an SPN, but also helps to elucidate one of the key properties implied by the structure of an SPN as summarized below:
\begin{proposition}
Given a normal SPN $\mathcal{S}$, let $p$ be a product node in $\mathcal{S}$ with $l$ children. Let $v_1, \ldots, v_k$ be sum nodes which lie on a path from the root of $\mathcal{S}$ to $p$. Then 
\begin{align}
& \Pr_{\mathcal{S}}(\mathbf{x}|_{\text{scope}(p)}~\Big|~H_{v_1} = v_1^*, \ldots, H_{v_k} = v_k^*) = \nonumber\\
& \prod_{i=1}^l\Pr_{\mathcal{S}}(\mathbf{x}|_{\text{scope}(p_i)}~\Big|~H_{v_1} = v_1^*, \ldots, H_{v_k} = v_k^*)
\label{prop:csi}
\end{align}
where $H_v = v^*$ means the sum node $v$ selects its $v^*$th branch and $\mathbf{x}|_A$ denotes restricting $\mathbf{x}$ by set $A$, $p_i$ is the $i$th child of product node $p$.
\end{proposition}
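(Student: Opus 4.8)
The plan is to reduce the conditional probabilities on both sides to network polynomials of sub-SPNs and then invoke the decomposability of $p$. The crucial reduction, which I would establish first, is an \emph{isolation identity}: conditioning on the branch selections $H_{v_1} = v_1^*, \ldots, H_{v_k} = v_k^*$ along the chosen root-to-$p$ path collapses the conditional marginal over $\text{scope}(p)$ onto the distribution defined by the sub-SPN $\mathcal{S}_p$ rooted at $p$, i.e.
\begin{equation}
\label{eq:isolation}
\Pr_{\mathcal{S}}\!\left(\mathbf{x}|_{\text{scope}(p)} \mid H_{v_1} = v_1^*, \ldots, H_{v_k} = v_k^*\right) = \Pr_{\mathcal{S}_p}\!\left(\mathbf{x}|_{\text{scope}(p)}\right).
\end{equation}
Since $\mathcal{S}_p$ is a sub-SPN of the normal SPN $\mathcal{S}$, it is itself complete, decomposable, and weight-normalized, so Corollary~\ref{coro:equal} identifies $\Pr_{\mathcal{S}_p}(\cdot) = f_p(\cdot)$ with the network polynomial at $p$, and likewise $\Pr_{\mathcal{S}_{p_i}}(\cdot) = f_{p_i}(\cdot)$ for each child.

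Granting the isolation identity \eqref{eq:isolation}, the proposition follows cleanly. Because $p$ is a product node and $\mathcal{S}$ is decomposable, the scopes $\text{scope}(p_1), \ldots, \text{scope}(p_l)$ are pairwise disjoint and their union is $\text{scope}(p)$, so the network polynomial factorizes as $f_p = \prod_{i=1}^l f_{p_i}$; this turns the left-hand side of the proposition into $\prod_i f_{p_i}$. For the right-hand side, I would marginalize \eqref{eq:isolation} down to $\text{scope}(p_i)$: summing $f_p = \prod_j f_{p_j}$ over the variables in $\text{scope}(p) \setminus \text{scope}(p_i)$ leaves $f_{p_i}$ multiplied by $\prod_{j \neq i} \sum_{\mathbf{x}|_{\text{scope}(p_j)}} f_{p_j}$, and each residual sum equals $1$ by Lemma~\ref{lemma:one}. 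Hence $\Pr_{\mathcal{S}}(\mathbf{x}|_{\text{scope}(p_i)} \mid H_{v_1}=v_1^*,\ldots,H_{v_k}=v_k^*) = f_{p_i}$, and multiplying over $i$ matches the factorization obtained for the left-hand side.

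The principal obstacle is thus proving the isolation identity \eqref{eq:isolation} (assuming the conditioning event has positive probability). For this I would make explicit the joint distribution $\Pr_{\mathcal{S}}(\mathbf{x}, \mathbf{h})$ obtained by attaching a hidden multinomial $H_v$ with $\Pr(H_v = i) = w_{v,i}$ to every sum node $v$, and then induct on the length of the root-to-$p$ path. At a sum node $v_j$ on the path, conditioning on $H_{v_j} = v_j^*$ selects the unique child continuing toward $p$ and contributes its weight $w_{v_j, v_j^*}$; at a product node on the path, decomposability ensures every sibling branch not leading to $p$ has scope disjoint from $\text{scope}(p)$, so summing out the observable and hidden variables confined to such a branch produces a factor of $1$ by Lemma~\ref{lemma:one}, and these off-path branches disappear from the conditional marginal. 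The accumulated branch weights appear identically in the numerator and in the normalizing denominator of the conditional probability and therefore cancel, leaving precisely $\Pr_{\mathcal{S}_p}(\mathbf{x}|_{\text{scope}(p)})$. The delicate bookkeeping is confirming that the hidden variables lying inside $\mathcal{S}_p$ but off the distinguished path are summed out consistently on both sides so that the cancellation is exact; this is where I expect the argument to require the most attention.
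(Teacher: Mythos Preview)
Your proposal is correct and follows essentially the same route as the paper: reduce the conditional distribution to the sub-SPN $\mathcal{S}_p$ via the isolation identity, then invoke decomposability of $p$ together with Corollary~\ref{coro:equal} to obtain the factorization. The paper's own proof is a two-sentence sketch that simply asserts the isolation step (``$\mathcal{S}_p$ can be obtained by restricting $H_{v_1} = v_1^*,\ldots, H_{v_k} = v_k^*$'') and then cites decomposability and Corollary~\ref{coro:equal}; your plan fills in exactly the details the paper elides, in particular the inductive argument along the path and the marginalization of off-path branches via Lemma~\ref{lemma:one}.
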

\begin{proof}
Consider the sub-SPN $\mathcal{S}_p$ rooted at $p$. $\mathcal{S}_p$ can be obtained by restricting $H_{v_1} = v_1^*,\ldots, H_{v_k} = v_k^*$, i.e., going from the root of $\mathcal{S}$ along the path $H_{v_1} = v_1^*,\ldots, H_{v_k} = v_k^*$. Since $p$ is a decomposable product node, $\mathcal{S}_p$ admits the above factorization by the definition of a product node and Corollary~\ref{coro:equal}. 
\end{proof}
Note that there may exist multiple paths from the root to $p$ in $\mathcal{S}$. Each such path admits the factorization stated in Eq.~\ref{prop:csi}. Eq.~\ref{prop:csi} explains two key insights implied by the structure of an SPN that will allow us to construct an equivalent BN with ADDs. First, CSI is efficiently encoded by the structure of an SPN using Proposition~\ref{prop:csi}. Second, the DAG structure of an SPN allows multiple assignments of hidden variables to share the same factorization, which effectively avoids the replication problem presents in decision trees.

Based on the observations above and with the help of the normal form for SPNs, we now proceed to prove the first main result in this paper: Thm.~\ref{thm:spn2bn}. First, we present the algorithm to construct the structure of a BN $\mathcal{B}$ from $\mathcal{S}$ in Alg.~\ref{alg:bn-structure}.
\begin{algorithm}[htb]
\centering
\caption{Build BN Structure}
\label{alg:bn-structure}
\begin{algorithmic}[1]
\REQUIRE normal SPN $\mathcal{S}$
\ENSURE BN $\mathcal{B} = (\mathcal{B}_V, \mathcal{B}_E)$
\STATE	$R\leftarrow$ root of $\mathcal{S}$
\IF {$R$ is a terminal node over variable $X$}
	\STATE	Create an observable variable $X$
	\STATE	$\mathcal{B}_V\leftarrow\mathcal{B}_V\cup\{X\}$
\ELSE
	\FOR	 {each child $R_i$ of $R$}
		\IF {BN has not been built for $\mathcal{S}_{R_i}$}
			\STATE Recursively build BN Structure for $\mathcal{S}_{R_i}$
		\ENDIF
	\ENDFOR
	\IF {$R$ is a sum node}
		\STATE	Create a hidden variable $H_R$ associated with $R$
		\STATE	$\mathcal{B}_V \leftarrow \mathcal{B}_V \cup \{H_R\}$
		\FOR {each observable variable $X \in \mathcal{S}_R$} 	
			\STATE $\mathcal{B}_E\leftarrow\mathcal{B}_E\cup\{(H_R, X)\}$
		\ENDFOR
	\ENDIF
\ENDIF
\end{algorithmic}
\end{algorithm}
In a nutshell, Alg.~\ref{alg:bn-structure} creates an observable variable $X$ in $\mathcal{B}$ for each terminal node over $X$ in $\mathcal{S}$ (Lines 2-4). For each internal sum node $v$ in $\mathcal{S}$, Alg.~\ref{alg:bn-structure} creates a hidden variable $H_v$ associated with $v$ and builds directed edges from $H_v$ to all observable variables $X$ appearing in the sub-SPN rooted at $v$ (Lines 11-17).\emph{ The BN $\mathcal{B}$ created by Alg.~\ref{alg:bn-structure} has a directed bipartite structure with a layer of hidden variables pointing to a layer of observable variables}. A hidden variable $H$ points to an observable variable $X$ in $\mathcal{B}$ iff $X$ appears in the sub-SPN rooted at $H$ in $\mathcal{S}$.
\begin{algorithm}[htb]
\centering
\caption{Build CPD using ADD, observable variable}
\label{alg:bn-ADD-observable}
\begin{algorithmic}[1]
\REQUIRE normal SPN $\mathcal{S}$, variable $X$
\ENSURE ADD $\mathcal{A}_X$
\IF {ADD has already been created for $\mathcal{S}$ and $X$}
	\STATE $\mathcal{A}_X\leftarrow$ retrieve ADD from cache
\ELSE
	\STATE	$R\leftarrow$ root of $\mathcal{S}$
	\IF {$R$ is a terminal node}
		\STATE $\mathcal{A}_X\leftarrow$ decision stump rooted at $R$
	\ELSIF {$R$ is a sum node}
		\STATE Create a node $H_R$ into $\mathcal{A}_X$
		\FOR {each $R_i\in Ch(R)$}
			\STATE Link $\text{BuildADD}(\mathcal{S}_{R_i}, X)$ as $i$th child of $H_R$
		\ENDFOR
	\ELSIF {$R$ is a product node}
		\STATE Find child $\mathcal{S}_{R_i}$ such that $X \in \text{scope}(R_i)$
		\STATE $\mathcal{A}_X\leftarrow \text{BuildADD}(\mathcal{S}_{R_i}, X)$
	\ENDIF
	\STATE store $\mathcal{A}_X$ in cache
\ENDIF
\end{algorithmic}
\end{algorithm}
\begin{algorithm}[htb]
\centering
\caption{Build CPD using ADD, hidden variable}
\label{alg:bn-ADD-hidden}
\begin{algorithmic}[1]
\REQUIRE normal SPN $\mathcal{S}$, variable $H$
\ENSURE ADD $\mathcal{A}_H$
\STATE	Find the sum node $H$ in $\mathcal{S}$
\STATE	$\mathcal{A}_H\leftarrow$ decision stump rooted at $H$ in $\mathcal{S}$
\end{algorithmic}
\end{algorithm}

We now present Alg.~\ref{alg:bn-ADD-observable} and~\ref{alg:bn-ADD-hidden} to build ADDs for each observable variable $X$ and hidden variable $H$ in $\mathcal{B}$. For each hidden variable $H$, Alg.~\ref{alg:bn-ADD-hidden} builds $\mathcal{A}_H$ as a decision stump\footnote{A decision stump is a decision tree with one variable.} obtained by finding $H$ and its associated weights in $\mathcal{S}$. Consider ADDs built by Alg.~\ref{alg:bn-ADD-observable} for observable variables $X$s. Let $X$ be the current observable variable we are considering. Basically, Alg.~\ref{alg:bn-ADD-observable} is a recursive algorithm applied to each node in $\mathcal{S}$ whose scope intersects with $\{X\}$. There are three cases. If current node is a terminal node, then it must be a probability distribution over $X$. In this case we simply return the decision stump at the current node. If the current node is a sum node, then due to the completeness of $\mathcal{S}$, we know that all the children of $R$ share the same scope with $R$. We first create a node $H_R$ corresponding to the hidden variable associated with $R$ into $\mathcal{A}_X$ (Line 8) and recursively apply Alg.~\ref{alg:bn-ADD-observable} to all the children of $R$ and link them to $H_R$ respectively.  If the current node is a product node, then due to the decomposability of $\mathcal{S}$, we know that there will be a unique child of $R$ whose scope intersects with $\{X\}$. We recursively apply Alg.~\ref{alg:bn-ADD-observable} to this child and return the resulting ADD (Lines 12-15).

Equivalently, Alg.~\ref{alg:bn-ADD-observable} can be understood in the following way: \emph{we extract the sub-SPN induced by $\{X\}$ and contract\footnote{In graph theory, the contraction of a node $v$ in a DAG is the operation that connects each parent of $v$ to each child of $v$ and then delete $v$ from the graph.} all the product nodes in it to obtain $\mathcal{A}_X$}. Note that the \emph{contraction} of product nodes will not add more edges into $\mathcal{A}_X$ since the out-degree of each product node in the induced sub-SPN must be 1 due to the decomposability of the product node. We illustrate the application of Alg.~\ref{alg:bn-structure},~\ref{alg:bn-ADD-observable} and~\ref{alg:bn-ADD-hidden} on the normal SPN in Fig.~\ref{fig:spn-normal}, which results in the BN $\mathcal{B}$ with CPDs represented by ADDs shown in Fig.~\ref{fig:spn2bn}.
\begin{figure*}[htb]
\centering
	\includegraphics[width=\textwidth]{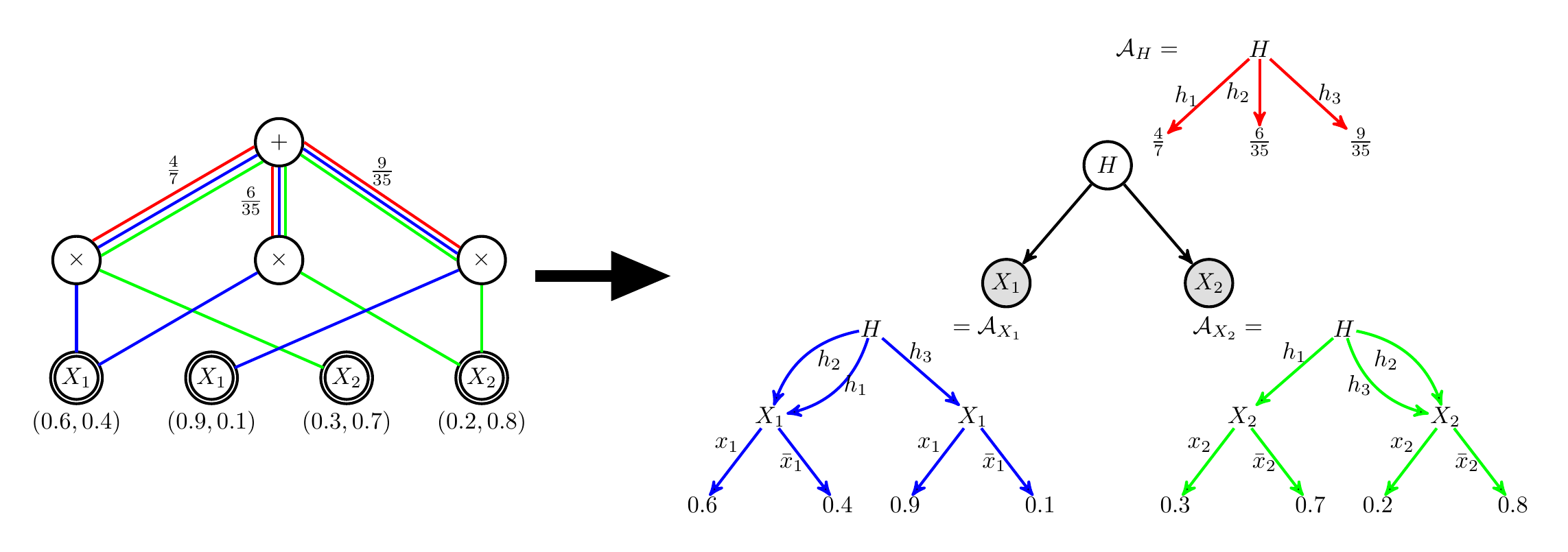}
\caption{Construct a BN with CPDs represented by ADDs from an SPN. On the left, the induced sub-SPNs used to create $\mathcal{A}_{X_1}$ and $\mathcal{A}_{X_2}$ by Alg.~\ref{alg:bn-ADD-observable} are indicated in blue and green respectively. The decision stump used to create $\mathcal{A}_H$ by Alg.~\ref{alg:bn-ADD-hidden} is indicated in red.}
\label{fig:spn2bn}
\end{figure*}

We now show that $\Pr_{\mathcal{S}}(\mathbf{x}) = \Pr_{\mathcal{B}}(\mathbf{x})\quad\forall\mathbf{x}$. 
\begin{lemma}
\label{lemma:cpdadd}
Given a normal SPN $\mathcal{S}$, the ADDs constructed by Alg.~\ref{alg:bn-ADD-observable} and~\ref{alg:bn-ADD-hidden} encode local CPDs at each node in $\mathcal{B}$.
\end{lemma}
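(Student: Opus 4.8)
The plan is to read the statement as two separate verifications, one for each kind of node in the bipartite BN $\mathcal{B}$, measured against the joint distribution $\Pr_{\mathcal{S}}(\mathbf{x},\mathbf{h})$ obtained by associating each sum node $v$ with its hidden selector variable $H_v$ (so that $\Pr(H_v=i)=w_i$ and marginalizing $\mathbf{h}$ recovers $\Pr_{\mathcal{S}}(\mathbf{x})$). Concretely, I would show (i) that for each source node $H$ the stump returned by Alg.~\ref{alg:bn-ADD-hidden} equals the prior $\Pr_{\mathcal{S}}(H)$, and (ii) that for each observable $X$ the ADD $\mathcal{A}_X$ returned by Alg.~\ref{alg:bn-ADD-observable} computes $\Pr_{\mathcal{S}}(X=x\mid\pi_X)$, where, by the construction in Alg.~\ref{alg:bn-structure}, $\pi_X$ is exactly the set of hidden variables $H_v$ with $X\in\text{scope}(v)$. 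Part (i) is immediate: $H$ is a source node with no parents, and the stump stores the multinomial weights that \emph{are} $\Pr(H_v=i)=w_i$; normality (condition 2 of Def.~\ref{def:normal}) guarantees these are nonnegative and sum to one.

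The substance is part (ii), which I would prove by structural induction on the sub-SPN $\mathcal{S}_R$ feeding the recursion $\text{BuildADD}(\mathcal{S}_R,X)$, establishing the invariant that $\text{BuildADD}(\mathcal{S}_R,X)$ evaluated at an assignment $\mathbf{h}$ to the sum nodes of $\mathcal{S}_R$ whose scope contains $X$ returns $\Pr_{\mathcal{S}_R}(X=x\mid\mathbf{h})$. Since $\mathcal{S}$ is normal, Corollary~\ref{coro:equal} lets me identify this conditional with the (normalized) network polynomial restricted by $\mathbf{h}$, and Lemma~\ref{lemma:one} guarantees the relevant subfunctions are genuine distributions, so I never have to track normalizing constants separately.

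For the induction itself: in the base case $R$ is a terminal node over $X$, Alg.~\ref{alg:bn-ADD-observable} returns the corresponding stump, and by condition 3 of Def.~\ref{def:normal} this stump is precisely the univariate distribution $\Pr_{\mathcal{S}_R}(X)$. If $R$ is a sum node, then completeness forces every child $R_i$ to contain $X$, the algorithm branches on $H_R$ and attaches $\text{BuildADD}(\mathcal{S}_{R_i},X)$ under value $i$; since selecting $H_R=i$ is, by definition of the hidden selector, equivalent to descending into $\mathcal{S}_{R_i}$, the conditional reduces to $\Pr_{\mathcal{S}_{R_i}}(X=x\mid\mathbf{h})$ and the inductive hypothesis closes the case. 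If $R$ is a product node, then decomposability guarantees a \emph{unique} child $R_j$ with $X\in\text{scope}(R_j)$; the algorithm contracts the product node onto $\text{BuildADD}(\mathcal{S}_{R_j},X)$, and the factorization of $\Pr_{\mathcal{S}_R}$ across the disjoint scopes (Corollary~\ref{coro:equal}) shows that marginalizing the sibling branches leaves exactly $\Pr_{\mathcal{S}_{R_j}}(X=x\mid\mathbf{h})$, so the inductive hypothesis again applies. Evaluating the invariant at the root with $X\in\text{scope}(\text{root})$ yields $\mathcal{A}_X(x,\pi_X)=\Pr_{\mathcal{S}}(X=x\mid\pi_X)$, which is the claim.

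I expect the main obstacle to be the product-node contraction step, because this is where context-specific independence is invoked: I must argue that the hidden variables sitting in the sibling branches $R_i$, $i\neq j$ --- which are \emph{not} parents of $X$ --- together with any parent $H_v\in\pi_X$ that happens not to be tested on the current ADD path, genuinely do not influence $\Pr_{\mathcal{S}}(X\mid\pi_X)$. This is precisely the CSI captured by Proposition~\ref{prop:csi} and by the scope-disjointness of decomposable products, and making it airtight requires care because $\mathcal{S}$ is a DAG: the same node may be reached along several paths, and the caching in Alg.~\ref{alg:bn-ADD-observable} means a single ADD fragment is shared across them. I would resolve this by observing that $\text{BuildADD}(\mathcal{S}_R,X)$ depends only on the pair $(\mathcal{S}_R,X)$, so the cached fragment computes the same conditional no matter how it is reached, which makes the shared-subgraph reuse consistent rather than problematic.
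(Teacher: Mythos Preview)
Your proposal is correct but it overshoots what the lemma actually asks, and in doing so takes a substantially heavier route than the paper.  The lemma, as stated, only claims that each $\mathcal{A}_H$ and $\mathcal{A}_X$ is a \emph{well-defined} CPD: for every configuration of the parent variables, the ADD returns a nonnegative function of the child that sums to one.  The paper's proof is correspondingly brief.  For a hidden variable $H$, the stump stores the edge weights of the corresponding sum node, and condition~2 of Def.~\ref{def:normal} guarantees these are nonnegative and sum to one.  For an observable variable $X$, the paper just argues that (a) every $H\in Pa(X)$ actually appears as an internal node of $\mathcal{A}_X$, so any joint assignment $\mathbf{h}$ of $Pa(X)$ determines a root-to-leaf path, and (b) every leaf is a decision stump over $X$ coming from a terminal node of $\mathcal{S}$, which by condition~3 of Def.~\ref{def:normal} is already a univariate distribution.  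That is the whole proof; the semantic claim that these CPDs reproduce $\Pr_{\mathcal{S}}$ is deferred to Thm.~\ref{thm:equaldist}.

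What you do instead is set up the joint $\Pr_{\mathcal{S}}(\mathbf{x},\mathbf{h})$ and prove by structural induction that $\mathcal{A}_X$ computes the \emph{specific} conditional $\Pr_{\mathcal{S}}(X\mid\pi_X)$.  This is a genuinely different decomposition: you are folding the syntactic check of Lemma~\ref{lemma:cpdadd} and a large part of the semantic correctness argument of Thm.~\ref{thm:equaldist} into a single induction on sub-SPNs.  The payoff is that, once your invariant is established, Thm.~\ref{thm:equaldist} would follow almost immediately by marginalizing $\mathbf{h}$, without the separate induction on the height of $\mathcal{S}$ that the paper runs there.  The cost is exactly the obstacle you flag: you must pin down $\Pr_{\mathcal{S}}(\mathbf{x},\mathbf{h})$ independently of the BN you are constructing (to avoid circularity), and you must argue carefully that hidden variables in sibling branches, and parents not tested along a given ADD path, are conditionally irrelevant.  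The paper sidesteps all of that here by proving only validity, and then handles the CSI reasoning once, inside the proof of Thm.~\ref{thm:equaldist} (specifically Eq.~\ref{equ:csii}).
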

\begin{proof}
It is easy to verify that for each hidden variable $H$ in $\mathcal{B}$, $\mathcal{A}_H$ represents a local CPD since $\mathcal{A}_H$ is a decision stump with normalized weights. 

For any observable variable $X$ in $\mathcal{B}$, let $Pa(X)$ be the set of parents of $X$. By Alg.~\ref{alg:bn-structure}, every node in $Pa(X)$ is a hidden variable. Furthermore, $\forall H$, $H\in Pa(X)$ iff there exists one terminal node over $X$ in $\mathcal{S}$ that appears in the sub-SPN rooted at $H$. Hence given any joint assignment $\mathbf{h}$ of $Pa(X)$, there will be a path in $\mathcal{A}_X$ from the root to a terminal node that is consistent with the joint assignment of the parents. Also, the leaves in $\mathcal{A}_X$ contain normalized weights corresponding to the probabilities of $X$ (see Def.~\ref{def:normal}) induced by the creation of decision stumps over $X$ in Lines 5-6 of Alg.~\ref{alg:bn-ADD-observable}.
\end{proof}

\begin{theorem}
\label{thm:equaldist}
For any normal SPN $\mathcal{S}$ over $\mathbf{X}_{1:N}$, the BN $\mathcal{B}$ constructed by Alg.~\ref{alg:bn-structure},~\ref{alg:bn-ADD-observable} and~\ref{alg:bn-ADD-hidden} encodes the same probability distribution, i.e., $\Pr_{\mathcal{S}}(\mathbf{x}) = \Pr_{\mathcal{B}}(\mathbf{x}), \forall\mathbf{x}$.
\end{theorem}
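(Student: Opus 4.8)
The plan is to compute the marginal $\Pr_{\mathcal{B}}(\mathbf{x}) = \sum_{\mathbf{h}} \Pr_{\mathcal{B}}(\mathbf{x}, \mathbf{h})$ over the hidden variables $\mathbf{H}$ and show that it coincides with the network polynomial $f_{\mathcal{S}}(\mathbf{x})$; since $\mathcal{S}$ is normal, Corollary~\ref{coro:equal} gives $f_{\mathcal{S}}(\mathbf{x}) = \Pr_{\mathcal{S}}(\mathbf{x})$, which closes the argument. First I would write out the joint of $\mathcal{B}$ using its bipartite structure: because every hidden variable $H$ is a source and every observable $X_n$ has only hidden parents, Eq.~\ref{equ:bnfactorization} together with Lemma~\ref{lemma:cpdadd} gives $\Pr_{\mathcal{B}}(\mathbf{x}, \mathbf{h}) = \prod_{H \in \mathbf{H}} w_{H, h_H} \prod_{n=1}^N \mathcal{A}_{X_n}[\mathbf{h}](x_n)$, where $w_{H, h_H}$ is the weight read off the decision stump $\mathcal{A}_H$ built by Alg.~\ref{alg:bn-ADD-hidden} and $\mathcal{A}_{X_n}[\mathbf{h}]$ denotes the univariate leaf reached in the ADD $\mathcal{A}_{X_n}$ by following the branches prescribed by $\mathbf{h}$.

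The key structural observation I would record next is that, by the construction in Alg.~\ref{alg:bn-ADD-observable}, $\mathcal{A}_{X_n}$ is exactly the sub-SPN induced by $\{X_n\}$ with all product nodes contracted. Consequently the leaf $\mathcal{A}_{X_n}[\mathbf{h}]$ depends only on the values $h_v$ of those sum nodes $v$ lying on the active path from the root, obtained by descending into the selected child $h_v$ at each sum node and, by decomposability, into the unique scope-carrying child at each product node. Collecting these active sum nodes over all $n$ yields the induced tree $\mathcal{T}_{\mathbf{h}}$ of $\mathcal{S}$; by completeness and decomposability its terminal nodes are precisely the leaves reached by the various $\mathcal{A}_{X_n}[\mathbf{h}]$, exactly one per variable in $\text{scope}(R)$. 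Hence $\prod_n \mathcal{A}_{X_n}[\mathbf{h}](x_n)$ equals the product of the univariate leaf distributions appearing in $\mathcal{T}_{\mathbf{h}}$ and depends on $\mathbf{h}$ only through its restriction to the active sum nodes.

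With this in hand I would carry out the marginalization by induction on the height of $\mathcal{S}$, mirroring Lemma~\ref{lemma:one} and Lemma~\ref{lemma:normalization}, the induction hypothesis being the claim for the BN produced by Alg.~\ref{alg:bn-structure}--\ref{alg:bn-ADD-hidden} from any normal SPN of smaller height. The base case (a single terminal node) is immediate. If the root $R$ is a product node, decomposability forces the children's scopes to be disjoint, so no sum node can be shared between two children (its scope would lie in the empty intersection); thus $\mathcal{B}$ splits into the disjoint BNs built from the $\mathcal{S}_{R_i}$, giving $\Pr_{\mathcal{B}}(\mathbf{x}) = \prod_i \Pr_{\mathcal{B}}(\mathbf{x}|_{\text{scope}(R_i)})$, which matches $f_{\mathcal{S}} = \prod_i f_{R_i}$ by the hypothesis. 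If the root $R$ is a sum node with weights $w_1, \ldots, w_l$, then $H_R$ sits at the top of every $\mathcal{A}_{X_n}$; conditioning on $H_R = i$ collapses each CPD to the one produced from $\mathcal{S}_{R_i}$, contributes the factor $w_i$, and makes every hidden variable reachable only through branches $j \neq i$ absent from all active ADD paths. Those irrelevant hidden variables then marginalize to $1$ because their stump weights sum to $1$ (condition~2 of Def.~\ref{def:normal}), leaving $\sum_i w_i f_{R_i}(\mathbf{x}) = f_{\mathcal{S}}(\mathbf{x})$.

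The step I expect to be the main obstacle is precisely this sum-node marginalization in the presence of the DAG (not tree) structure of $\mathcal{S}$: because a sum node, and hence a single hidden variable $H_v$, can be shared across several branches and across otherwise distinct parts of the network, the set of active hidden variables depends on $\mathbf{h}$ itself, and a variable that is irrelevant under $H_R = i$ might still be relevant elsewhere. To handle this I would argue at the level of reachability in $\mathcal{B}$ rather than on a fixed tree: conditioned on the selections at ancestral sum nodes, each hidden variable either appears on some active ADD path (and is then pinned by $\mathbf{h}$) or influences no CPD value, in which case summing its stump weights yields $1$; Proposition~\ref{prop:csi} supplies the local factorization that justifies treating a product node's children independently once the path selections are fixed. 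Verifying that this reachability bookkeeping is consistent -- that no hidden variable is simultaneously counted as active and summed out -- is the delicate part, and is exactly what makes the normal form (a single indicator per shared variable, weights summing to one, and univariate leaves) essential.
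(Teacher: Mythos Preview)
Your proposal is correct and follows essentially the same route as the paper: induction on the height of $\mathcal{S}$, with the product case handled by showing decomposability forces the sub-SPNs (and hence the constructed sub-BNs) to be disconnected, and the sum case handled by conditioning on $H_R=i$, reducing to the BN built from $\mathcal{S}_{R_i}$, and marginalizing out the hidden variables not in that branch. The DAG-sharing obstacle you flag as the delicate part is resolved in the paper more simply than you anticipate: since $R$ is the root, $H_R$ sits at the top of \emph{every} $\mathcal{A}_{X_n}$, so the partition $\mathbf{H}_t$ (hidden variables appearing in $\mathcal{S}_{R_t}$) versus $\mathbf{H}_{-t}=\mathbf{H}\setminus\mathbf{H}_t$ is unambiguous, and the CSI encoded in the ADDs makes $(\mathbf{X},\mathbf{H}_t)$ independent of $\mathbf{H}_{-t}$ given $H=h_t$, after which $\mathbf{H}_{-t}$ sums out to $1$ in a single step---no reachability bookkeeping beyond this fixed partition is needed.
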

\begin{proof}
Again, we give a proof by induction on the height of $\mathcal{S}$.
\begin{itemize}
	\item 	Base case. The height of SPN $\mathcal{S}$ is 0. In this case, $\mathcal{S}$ will be a single terminal node over $X$ and $\mathcal{B}$ will be a single observable node with decision stump $\mathcal{A}_X$ constructed from the terminal node by Lines 5-6 in Alg.~\ref{alg:bn-ADD-observable}. It is clear that $\Pr_{\mathcal{S}}(x) = \Pr_{\mathcal{B}}(x),\forall x$.
	\item 	Induction step. Assume $\Pr_{\mathcal{B}}(\mathbf{x}) = \Pr_{\mathcal{S}}(\mathbf{x}),\forall\mathbf{x}$ for any $\mathcal{S}$ with height $\leq k$, where $\mathcal{B}$ is the corresponding BN constructed by Alg.~\ref{alg:bn-structure},~\ref{alg:bn-ADD-observable} and~\ref{alg:bn-ADD-hidden} from $\mathcal{S}$. Consider an SPN $\mathcal{S}$ with height $k+1$. Let $R$ be the root of $\mathcal{S}$ and $R_i, i\in 1:l$ be the children of $R$ in $\mathcal{S}$. We consider the following two cases:
	\begin{itemize}
		\item 	$R$ is a product node. Let $\text{scope}(R_t) = \mathbf{X}_t, t\in 1:l$. Claim: there is no edge between $\mathcal{S}_{i}$ and $\mathcal{S}_{j}, i\neq j$, where $\mathcal{S}_i (\mathcal{S}_j)$ is the sub-SPN rooted at $R_i (R_j)$. If there is an edge, say, from $v_j$ to $v_i$ where $v_j\in\mathcal{S}_{j}$ and $v_i\in\mathcal{S}_{i}$, then $\text{scope}(v_i)\subseteq\text{scope}(v_j)\subseteq\text{scope}(R_j)$. On the other hand, $\text{scope}(v_i)\subseteq\text{scope}(R_i)$. So we have $\varnothing\neq\text{scope}(v_i)\subseteq\text{scope}(R_i)\bigcap\text{scope}(R_j)$, which contradicts the decomposability of the product node $R$. Hence the constructed BN $\mathcal{B}$ will be a forest of $l$ disconnected components, and each component $\mathcal{B}_t$ will correspond to the sub-SPN $\mathcal{S}_t$ rooted at $R_t, \forall t\in 1:l$, with height $\leq k$. By the induction hypothesis we have $\Pr_{\mathcal{B}_t}(\mathbf{x}_t)  = \Pr_{\mathcal{S}_t}(\mathbf{x}_t), \forall t \in 1:l$. Consider the whole BN $\mathcal{B}$, we have:
\begin{equation}
\Pr_\mathcal{B}(\mathbf{x}) = \prod_t\Pr_{\mathcal{B}_t}(\mathbf{x}_t)  = \prod_t\Pr_{\mathcal{S}_t}(\mathbf{x}_t) = \Pr_\mathcal{S}(\mathbf{x})
\end{equation}
where the first equation is due to the $d$-separation rule in BNs by noting that each component $\mathcal{B}_t$ is disconnected from all other components. The second equation follows from the induction hypothesis. The last equation follows from the definition of a product node.
		\item 	$R$ is a sum node. In this case, due to the completeness of $\mathcal{S}$, all the children of $R$ share the same scope as $R$. By the construction process presented in Alg.~\ref{alg:bn-structure},~\ref{alg:bn-ADD-observable} and~\ref{alg:bn-ADD-hidden}, there is a hidden variable $H$ corresponding to $R$ that takes $l$ different values in $\mathcal{B}$. Let $w_{1:l}$ be the weights of the edges emanating from $R$ in $\mathcal{S}$. For the $t$th branch of $R$, we use $\mathbf{H}_t$ to denote the set of hidden variables in $\mathcal{B}$ that also appear in $\mathcal{B}_t$, and let $\mathbf{H}_{-t} = \mathbf{H}\backslash\mathbf{H}_t$, where $\mathbf{H}$ is the set of all hidden variables in $\mathcal{B}$ except $H$. First, we show the following identity:
\begin{align}
&\Pr_\mathcal{B}(\mathbf{x} | H = h_t) = \sum_{\mathbf{h}_t}\sum_{\mathbf{h}_{-t}}\Pr_\mathcal{B}(\mathbf{x}, \mathbf{h}_t, \mathbf{h}_{-t} | H = h_t)\\
& = \sum_{\mathbf{h}_t}\sum_{\mathbf{h}_{-t}}\Pr_\mathcal{B}(\mathbf{x}, \mathbf{h}_t | H = h_t, \mathbf{h}_{-t})\Pr_\mathcal{B}( \mathbf{h}_{-t} | H = h_t)\\
& = \sum_{\mathbf{h}_t}\sum_{\mathbf{h}_{-t}}\Pr_\mathcal{B}(\mathbf{x}, \mathbf{h}_t | H = h_t)\Pr_\mathcal{B}( \mathbf{h}_{-t} | H = h_t)\label{equ:csii}\\
& = \sum_{\mathbf{h}_t}\Pr_\mathcal{B}(\mathbf{x}, \mathbf{h}_t | H = h_t)\sum_{\mathbf{h}_{-t}}\Pr_\mathcal{B}( \mathbf{h}_{-t} | H = h_t)\label{equ:ve}\\
& = \sum_{\mathbf{h}_t}\Pr_\mathcal{B}(\mathbf{x}, \mathbf{h}_t | H = h_t) \\
& = \sum_{\mathbf{h}_t}\Pr_{\mathcal{B}_t}(\mathbf{x}, \mathbf{h}_t) = \Pr_{\mathcal{B}_t}(\mathbf{x})\label{equ:ttt}
\end{align}		
Using this identity, we have
\begin{align}
\Pr_\mathcal{B}(\mathbf{x}) &= \sum_{t=1}^l\Pr_{\mathcal{B}}(h_t)\Pr_{\mathcal{B}}(\mathbf{x} | H = h_t) \\
& = \sum_{t=1}^l w_t\Pr_{\mathcal{B}_t}(\mathbf{x}) \label{equ:id}\\
& = \sum_{t=1}^l w_t\Pr_{\mathcal{S}_t}(\mathbf{x}) \label{equ:ind}\\
& = \Pr_{\mathcal{S}}(\mathbf{x})
\end{align}
Eq.~\ref{equ:csii} follows from the fact that $\mathbf{X}$ and $\mathbf{H}_t$ are independent of $\mathbf{H}_{-t}$ given $H = h_t$, i.e., we take advantage of the CSI described by ADDs of $\mathbf{X}$. Eq.~\ref{equ:ve} follows from the fact that $\mathbf{H}_{-t}$ appears only in the second term. Combined with the fact that $H = h_t$ is given as evidence in $\mathcal{B}$, this gives us the induced subgraph $\mathcal{B}_t$ referred to in Eq.~\ref{equ:ttt}. Eq.~\ref{equ:id} follows from Eq.~\ref{equ:ttt} and Eq.~\ref{equ:ind} follows from the induction hypothesis.
	\end{itemize}
\end{itemize}
Combing the base case and the induction step completes the proof for Thm.~\ref{thm:equaldist}.
\end{proof}
We now bound the size of $\mathcal{B}$:
\begin{theorem}
\label{thm:size}
$|\mathcal{B}| = O(N|\mathcal{S}|)$, where BN $\mathcal{B}$ is constructed by Alg.~\ref{alg:bn-structure},~\ref{alg:bn-ADD-observable} and~\ref{alg:bn-ADD-hidden} from normal SPN $\mathcal{S}$ over $\mathbf{X}_{1:N}$.
\end{theorem}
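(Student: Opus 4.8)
The plan is to split $|\mathcal{B}|$ into its two defining components---the size of the bipartite DAG and the total size of all the ADD-represented CPDs---and show that each is $O(N|\mathcal{S}|)$. For the graph itself, recall that Alg.~\ref{alg:bn-structure} creates one hidden variable per sum node of $\mathcal{S}$ and one observable variable per distinct $X$, so the vertex count is at most $\mathfrak{V}(\mathcal{S}) + N = O(|\mathcal{S}| + N)$. Every edge of $\mathcal{B}$ runs from some $H_v$ to some $X$, and the number of such (hidden, observable) pairs is bounded by (number of sum nodes) $\times\, N \le \mathfrak{V}(\mathcal{S})\cdot N$, giving $O(N|\mathcal{S}|)$ edges. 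Hence the graph contributes $O(N|\mathcal{S}|)$.

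Next I would dispatch the easy half of the CPDs. By Alg.~\ref{alg:bn-ADD-hidden}, each $\mathcal{A}_H$ is a decision stump over the single variable $H$: for a sum node of out-degree $l$ it has $1+l$ nodes and $l$ edges, i.e.\ size $O(l)$. Summing over all sum nodes, $\sum_v O(l_v) = O(\mathfrak{E}(\mathcal{S})) = O(|\mathcal{S}|)$, since $\sum_v l_v$ merely counts the edges leaving sum nodes in $\mathcal{S}$. So all hidden-variable CPDs together cost $O(|\mathcal{S}|)$.

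The substantive step is bounding each observable-variable CPD $\mathcal{A}_X$ by $O(|\mathcal{S}|)$. Here I would invoke the characterization given just after Alg.~\ref{alg:bn-ADD-observable}: $\mathcal{A}_X$ is obtained by taking the sub-SPN of $\mathcal{S}$ induced by the nodes whose scope contains $X$ and then contracting every product node. Two facts make this tight. First, because Alg.~\ref{alg:bn-ADD-observable} caches its output keyed on the current subgraph and $X$ (Lines 1--2), each node of the induced sub-SPN is expanded at most once, so the recursion does not unfold the DAG into a tree; the pre-contraction ADD has at most one internal node per induced-sub-SPN node and hence size $O(|\mathcal{S}|)$. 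Second, contracting the product nodes adds no edges, since by decomposability exactly one child of each product node can contain $X$ in its scope, so every product node in the induced sub-SPN has out-degree $1$. Therefore $|\mathcal{A}_X| = O(|\mathcal{S}|)$, and summing over the $N$ observable variables yields $O(N|\mathcal{S}|)$ for all observable CPDs. Adding the three contributions gives $|\mathcal{B}| = O(N|\mathcal{S}|)$.

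The part I expect to be the crux is precisely this last bound on each $\mathcal{A}_X$: one must argue that caching keeps the construction polynomial rather than letting shared sub-SPNs be duplicated into an exponentially large tree, and that product-node contraction introduces no new edges. Both points rest on the normal-form properties---completeness supplies the correct branching at sum nodes in Alg.~\ref{alg:bn-ADD-observable} (Lines 7--11), while decomposability (condition~1 of Def.~\ref{def:normal}) forces the out-degree-$1$ structure at product nodes that makes contraction free. The graph and hidden-CPD bounds, by contrast, are essentially counting arguments and require little beyond the bipartite structure established by Alg.~\ref{alg:bn-structure}.
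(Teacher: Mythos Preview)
Your proposal is correct and follows essentially the same approach as the paper: both proofs split $|\mathcal{B}|$ into the bipartite graph, the hidden-variable ADDs, and the observable-variable ADDs, and both hinge on the same key observation that decomposability forces every product node in the $X$-induced sub-SPN to have out-degree $1$, so contraction adds no edges and $|\mathcal{A}_X|\le|\mathcal{S}|$. The only cosmetic difference is that the paper bounds the edge count of the graph via $\sum_H |\text{scope}(H)|\cdot\deg(H)\le 2N|\mathcal{S}|$ rather than your simpler (number of sum nodes)$\times N$ bound, and does not explicitly mention caching (it is implicit in the sub-SPN characterization); neither affects the argument or the result.
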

\begin{proof}
For each observable variable $X$ in $\mathcal{B}$, $\mathcal{A}_X$ is constructed by first extracting from $\mathcal{S}$ the induced sub-SPN $\mathcal{S}_X$ that contains all nodes whose scope includes $X$ and then contracting all the product nodes in $\mathcal{S}_X$ to obtain $\mathcal{A}_X$. By the decomposability of product nodes, each product node in $\mathcal{S}_X$ has out-degree 1 otherwise the original SPN $\mathcal{S}$ violates the decomposability property. Since contracting product nodes does not increase the number of edges in $\mathcal{S}_X$, we have $|\mathcal{A}_X|\leq |\mathcal{S}_X|\leq|\mathcal{S}|$.

For each hidden variable $H$ in $\mathcal{B}$, $\mathcal{A}_H$ is a decision stump constructed from the internal sum node corresponding to $H$ in $\mathcal{S}$. Hence, we have $\sum_{H}\mathcal{A}_H\leq |\mathcal{S}|$.

Now consider the size of the graph $\mathcal{B}$. Note that only terminal nodes and sum nodes will have corresponding variables in $\mathcal{B}$. It is clear that the number of nodes in $\mathcal{B}$ is bounded by the number of nodes in $\mathcal{S}$. Furthermore, a hidden variable $H$ points to an observable variable $X$ in $\mathcal{B}$ iff $X$ appears in the sub-SPN rooted at $H$ in $\mathcal{S}$, i.e., there is a path from the sum node corresponding to $H$ to one of the terminal nodes in $X$. For a sum node $H$ (which corresponds to a hidden variable $H\in\mathcal{B}$) with scope size $s$, each edge emanated from $H$ in $\mathcal{S}$ will correspond to directed edges in $\mathcal{B}$ at most $s$ times, since there are exactly $s$ observable variables which are children of $H$ in $\mathcal{B}$. It is clear that $s\leq N$, so each edge emanated from a sum node in $\mathcal{S}$ will be counted at most $N$ times in $\mathcal{B}$. Edges from product nodes will not occur in the graph of $\mathcal{B}$, instead, they have been counted in the ADD representations of the local CPDs in $\mathcal{B}$. So again, the size of the graph $\mathcal{B}$ is bounded by $\sum_{H}\text{scope}(H)\times\text{deg}(H) \leq \sum_{H}N\text{deg}(H)\leq 2N|\mathcal{S}|$. 

There are $N$ observable variables in $\mathcal{B}$. So the total size of $\mathcal{B}$, including the size of the graph and the size of all the ADDs, is bounded by $N|\mathcal{S}| + |\mathcal{S}| + 2N|\mathcal{S}| = O(N|\mathcal{S}|)$.
\end{proof}
We give the time complexity of Alg.~\ref{alg:bn-structure},~\ref{alg:bn-ADD-observable} and~\ref{alg:bn-ADD-hidden}.
\begin{theorem}
\label{thm:time}
For any normal SPN $\mathcal{S}$ over $\mathbf{X}_{1:N}$, Alg.~\ref{alg:bn-structure},~\ref{alg:bn-ADD-observable} and~\ref{alg:bn-ADD-hidden} construct an equivalent BN in time $O(N|\mathcal{S}|)$.
\end{theorem}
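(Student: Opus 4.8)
The plan is to bound the running time of each of the three construction algorithms separately and then add the bounds. Since Theorem~\ref{thm:size} already establishes $|\mathcal{B}| = O(N|\mathcal{S}|)$, this is simultaneously a lower bound on the time merely to emit the output, so $O(N|\mathcal{S}|)$ is the best one could hope for; the real work is to verify that, provided intermediate results are memoized, no algorithm does asymptotically more than write its own output. The whole argument will mirror the charging scheme used in the proof of Theorem~\ref{thm:size}.

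First I would isolate a preprocessing step: compute $\text{scope}(v)$ for every node $v\in\mathcal{S}$ in a single bottom-up (inverse topological) pass, storing each scope as a length-$N$ bit-vector. Each node's scope is the union of its children's scopes, costing $O(N)$ per edge and hence $O(N\,\mathfrak{E}(\mathcal{S})) = O(N|\mathcal{S}|)$ in total. This step is needed because both Alg.~\ref{alg:bn-structure} (to enumerate the observable children of a hidden variable) and Alg.~\ref{alg:bn-ADD-observable} (to decide which child of a product node contains $X$) depend on scope information being available in constant time per query.

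Next, for Alg.~\ref{alg:bn-structure}: with the memoization implied by the test ``BN has not been built for $\mathcal{S}_{R_i}$,'' each sub-SPN is expanded at most once, so the recursion touches each node only $O(1)$ times. The sole non-constant work per node occurs at a sum node $R$, where edges $(H_R,X)$ are added for every observable $X\in\text{scope}(R)$, costing $O(|\text{scope}(R)|)\leq O(N)$. Summing over sum nodes yields $\sum_H|\text{scope}(H)|\leq N\cdot(\text{number of sum nodes}) = O(N|\mathcal{S}|)$. For Alg.~\ref{alg:bn-ADD-observable} I would fix an observable variable $X$: the explicit cache guarantees each node of the induced sub-SPN $\mathcal{S}_X$ (the nodes whose scope contains $X$) is processed once, with $O(\text{deg})$ work to create and link children, so $\mathcal{A}_X$ is built in $O(|\mathcal{S}_X|)$ time. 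The key accounting identity is $\sum_X \mathfrak{V}(\mathcal{S}_X) = \sum_{v\in\mathcal{S}}|\text{scope}(v)|$, since a node $v$ belongs to $\mathcal{S}_X$ precisely for the $X\in\text{scope}(v)$; the analogous edge bound gives $\sum_X |\mathcal{S}_X|\leq N|\mathcal{S}|$. Thus all observable-variable ADDs together cost $O(N|\mathcal{S}|)$. Finally, Alg.~\ref{alg:bn-ADD-hidden} merely reads off a decision stump of size $O(\text{deg}(H))$ for each hidden variable, totalling $\sum_H\text{deg}(H)\leq|\mathcal{S}|$.

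Adding the three bounds $O(N|\mathcal{S}|) + O(N|\mathcal{S}|) + O(|\mathcal{S}|)$ to the $O(N|\mathcal{S}|)$ preprocessing gives the claimed $O(N|\mathcal{S}|)$. I expect the main obstacle to be the bookkeeping around memoization rather than any deep idea: one must check that the caches in Alg.~\ref{alg:bn-structure} and~\ref{alg:bn-ADD-observable} genuinely prevent re-traversal of shared sub-DAGs, so that a node reachable by exponentially many root-to-node paths is nonetheless charged only once per variable $X$, and that the identity $\sum_X \mathfrak{V}(\mathcal{S}_X) = \sum_v|\text{scope}(v)|$ correctly aligns the per-node, per-variable cost with the $N|\mathcal{S}|$ budget. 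Everything else is a routine charging argument parallel to Theorem~\ref{thm:size}.
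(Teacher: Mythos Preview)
Your proposal is correct and follows essentially the same charging-per-node strategy as the paper: bound Alg.~\ref{alg:bn-structure} by $O(N)$ work per sum node plus $O(\text{deg})$ per node, and bound Alg.~\ref{alg:bn-ADD-observable} and~\ref{alg:bn-ADD-hidden} by the same argument that gave $|\mathcal{B}|=O(N|\mathcal{S}|)$ in Theorem~\ref{thm:size}. Your explicit $O(N|\mathcal{S}|)$ scope-precomputation step and the accounting identity $\sum_X \mathfrak{V}(\mathcal{S}_X)=\sum_v|\text{scope}(v)|$ are useful refinements the paper leaves implicit, but the overall route is the same.
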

\begin{proof}
First consider Alg.~\ref{alg:bn-structure}. Alg.~\ref{alg:bn-structure} recursively visits each node and its children in $\mathcal{S}$ if they have not been visited (Lines 6-10). For each node $v$ in $\mathcal{S}$, Lines 7-9 cost at most $2\cdot\text{out-degree}(v)$. If $v$ is a sum node, then Lines 11-17 create a hidden variable and then connect the hidden variable to all observable variables that appear in the sub-SPN rooted at $v$, which is clearly bounded by the number of all observable variables, $N$. So the total cost of Alg.~\ref{alg:bn-structure} is bounded by $\sum_{v} 2\cdot\text{out-degree}(v) + \sum_{v\text{ is a sum node}}N \leq 2\mathfrak{V}(\mathcal{S}) + 2\mathfrak{E}(\mathcal{S}) + N\mathfrak{V}(\mathcal{S})\leq 2|\mathcal{S}| + N|\mathcal{S}| = O(N|\mathcal{S}|)$. Note that we assume that inserting an element into a set can be done in $O(1)$ by using hashing.

The analysis for Alg.~\ref{alg:bn-ADD-observable} and~\ref{alg:bn-ADD-hidden} follows from the same analysis as in the proof for Thm.~\ref{thm:size}. The time complexity for Alg.~\ref{alg:bn-ADD-observable} and Alg.~\ref{alg:bn-ADD-hidden} is then bounded by $N|\mathcal{S}| + |\mathcal{S}| = O(N|\mathcal{S}|)$.
\end{proof}
\begin{proof}[Proof of Thm.~\ref{thm:spn2bn}]
The combination of Thm.~\ref{thm:equaldist},~\ref{thm:size} and~\ref{thm:time} proves Thm.~\ref{thm:spn2bn}.
\end{proof}
\begin{proof}[Proof of Corollary.~\ref{coro:spn2bn}]
Given a complete and consistent SPN $\mathcal{S}$, we can first transform it into a normal SPN $\mathcal{S}'$ with $|\mathcal{S}'| = O(|\mathcal{S}|^2)$ by Thm.~\ref{thm:normal} if it is not normal. After this the analysis follows from Thm.~\ref{thm:spn2bn}.
\end{proof}

\subsection{BN to SPN}
It is known that a BN with CPDs represented by tables can be converted into an SPN by first converting the BN into a junction tree and then translating the junction tree into an SPN. The size of the generated SPN, however, will be exponential in the tree-width of the original BN since the tabular representation of CPDs is ignorant of CSI. As a result, the generated SPN loses its power to compactly represent some BNs with high tree-width, yet, with CSI in its local CPDs. 

Alternatively, one can also compile a BN with ADDs into an AC~\cite{chavira2007compiling} and then convert an AC into an SPN~\cite{rooshenas2014learning}. However, in \citet{chavira2007compiling}'s compilation approach, the variables appearing along a path from the root to a leaf in each ADD must be consistent with a pre-defined global variable ordering. The global variable ordering, may, to some extent restrict the compactness of ADDs as the most compact representation for different ADDs normally have different topological orderings. Interested readers are referred to \cite{chavira2007compiling} for more details on this topic.

In this section, we focus on BNs with ADDs that are constructed using Alg.~\ref{alg:bn-ADD-observable} and~\ref{alg:bn-ADD-hidden} from normal SPNs. We show that when applying VE to those BNs with ADDs we can recover the original normal SPNs. \emph{The key insight is that the structure of the original normal SPN naturally defines a global variable ordering that is consistent with the topological ordering of every ADD constructed}. More specifically, since all the ADDs constructed using Alg.~\ref{alg:bn-ADD-observable} are induced sub-SPNs with contraction of product nodes from the original SPN $\mathcal{S}$, the topological ordering of all the nodes in $\mathcal{S}$ can be used as the pre-defined variable ordering for all the ADDs. 
\begin{algorithm}[htb!]
\centering
\caption{Multiplication of two symbolic ADDs, $\otimes$}
\label{alg:multiplication}
\begin{algorithmic}[1]
\REQUIRE	Symbolic ADD $\mathcal{A}_{X_1}$, $\mathcal{A}_{X_2}$
\ENSURE	Symbolic ADD $\mathcal{A}_{X_1, X_2} = \mathcal{A}_{X_1}\otimes\mathcal{A}_{X_2}$
\STATE	$R_1\leftarrow$ root of $\mathcal{A}_{X_1}$, $R_2\leftarrow$ root of $\mathcal{A}_{X_2}$
\IF {$R_1$ and $R_2$ are both variable nodes}
    \IF {$R_1= R_2$}
    	\STATE	Create a node $R=R_1$ into $\mathcal{A}_{X_1, X_2}$
    	\FOR {each $r\in dom(R)$}
    		\STATE	$\mathcal{A}_{X_1}^r\leftarrow Ch(R_1)|_r$
    		\STATE	$\mathcal{A}_{X_2}^r\leftarrow Ch(R_2)|_r$
    		\STATE	$\mathcal{A}_{X_1,X_2}^r\leftarrow\mathcal{A}_{X_1}^r\otimes\mathcal{A}_{X_2}^r$
    		\STATE	Link $\mathcal{A}_{X_1, X_2}^r$ to the $r$th child of $R$ in $\mathcal{A}_{X_1, X_2}$  
    	\ENDFOR
    \ELSE
    	\STATE 	$\mathcal{A}_{X_1, X_2}\leftarrow$ create a symbolic node $\otimes$
    	\STATE	Link $\mathcal{A}_{X_1}$ and $\mathcal{A}_{X_2}$ as two children of $\otimes$
    \ENDIF
\ELSIF {$R_1$ is a variable node and $R_2$ is $\otimes$}
    \IF {$R_1$ appears as a child of $R_2$}
		\STATE	$\mathcal{A}_{X_1, X_2}\leftarrow\mathcal{A}_{X_2}$
		\STATE	$\mathcal{A}_{X_1,X_2}^{R_1}\leftarrow\mathcal{A}_{X_1}\otimes\mathcal{A}_{X_2}^{R_1}$
	\ELSE
    		\STATE  Link $\mathcal{A}_{X_1}$ as a new child of $R_2$
		\STATE	$\mathcal{A}_{X_1, X_2}\leftarrow\mathcal{A}_{X_2}$
	\ENDIF
\ELSIF {$R_1$ is $\otimes$ and $R_2$ is a variable node}
    \IF {$R_2$ appears as a child of $R_1$}
		\STATE	$\mathcal{A}_{X_1, X_2}\leftarrow\mathcal{A}_{X_1}$
		\STATE	$\mathcal{A}_{X_1,X_2}^{R_2}\leftarrow\mathcal{A}_{X_2}\otimes\mathcal{A}_{X_1}^{R_2}$
	\ELSE
    		\STATE  Link $\mathcal{A}_{X_2}$ as a new child of $R_1$
		\STATE	$\mathcal{A}_{X_1, X_2}\leftarrow\mathcal{A}_{X_1}$
	\ENDIF
\ELSE 
    \STATE  $\mathcal{A}_{X_1, X_2}\leftarrow$ create a symbolic node $\otimes$
    \STATE  Link $\mathcal{A}_{X_1}$ and $\mathcal{A}_{X_2}$ as two children of $\otimes$
\ENDIF
\STATE	Merge connected product nodes in $\mathcal{A}_{X_1, X_2}$
\end{algorithmic}
\end{algorithm}

\begin{algorithm}[htb!]
\centering
\caption{Summing-out a hidden variable $H$ from $\mathcal{A}$ using $\mathcal{A}_H$, $\oplus$}
\label{alg:sumout}
\begin{algorithmic}[1]
\REQUIRE		Symbolic ADDs $\mathcal{A}$ and $\mathcal{A}_H$
\ENSURE		Symbolic ADD with $H$ summed out
\IF {$H$ appears in $\mathcal{A}$}
	\STATE	Label each edge emanating from $H$ with weights obtained from $\mathcal{A}_H$ 
	\STATE	Replace $H$ by a symbolic $\oplus$ node
\ENDIF
\end{algorithmic}
\end{algorithm}

In order to apply VE to a BN with ADDs, we need to show how to apply two common operations used in VE, i.e., multiplication of two factors and summing-out a hidden variable, on ADDs. For our purpose, we use a \emph{symbolic ADD} as an intermediate representation during the inference process of VE by allowing symbolic operations, such as $+, -, \times, /$ to appear as internal nodes in ADDs. In this sense, an ADD can be viewed as a special type of symbolic ADD where all the internal nodes are variables. The same trick was applied by \cite{chavira2007compiling} in their compilation approach. For example, given symbolic ADDs $\mathcal{A}_{X_1}$ over $X_1$ and $\mathcal{A}_{X_2}$ over $X_2$, Alg.~\ref{alg:multiplication} returns a symbolic ADD $\mathcal{A}_{X_1,X_2}$ over $X_1, X_2$ such that $\mathcal{A}_{X_1, X_2}(x_1, x_2) \triangleq\left(\mathcal{A}_{X_1}\otimes\mathcal{A}_{X_2}\right)(x_1,x_2) = \mathcal{A}_{X_1}(x_1)\times\mathcal{A}_{X_2}(x_2)$. To simplify the presentation, we choose the inverse topological ordering of the hidden variables in the original SPN $\mathcal{S}$ as the elimination order used in VE. This helps to avoid the situations where a multiplication is applied to a sum node in symbolic ADDs. Other elimination orders could be used, but a more detailed discussion of sum nodes is needed.

Given two symbolic ADDs $\mathcal{A}_{X_1}$ and $\mathcal{A}_{X_2}$, Alg.~\ref{alg:multiplication} recursively visits nodes in $\mathcal{A}_{X_1}$ and $\mathcal{A}_{X_2}$ simultaneously. In general, there are 3 cases: 1) the roots of $\mathcal{A}_{X_1}$ and $\mathcal{A}_{X_2}$ are both variable nodes (Lines 2-14); 2) one of the two roots is a variable node and the other is a product node (Lines 15-30); 3) both roots are product nodes or at least one of them is a sum node (Lines 31-34). We discuss these 3 cases. 

If both roots of $\mathcal{A}_{X_1}$ and $\mathcal{A}_{X_2}$ are variable nodes, there are two subcases to be considered. First, if they are nodes labeled with the same variable (Lines 3-10), then the computation related to the common variable is shared and the multiplication is recursively applied to all the children, otherwise we simply create a symbolic product node $\otimes$ and link $\mathcal{A}_{X_1}$ and $\mathcal{A}_{X_2}$ as its two children (Lines 11-14). Once we find $R_1\in\mathcal{A}_{X_1}$ and $R_2\in\mathcal{A}_{X_2}$ such that $R_1\neq R_2$, there will be no common node that is shared by the sub-ADDs rooted at $R_1$ and $R_2$. To see this, note that Alg.~\ref{alg:multiplication} recursively calls itself as long as the roots of $\mathcal{A}_{X_1}$ and $\mathcal{A}_{X_2}$ are labeled with the same variable. Let $R$ be the last variable shared by the roots of $\mathcal{A}_{X_1}$ and $\mathcal{A}_{X_2}$ in Alg.~\ref{alg:multiplication}. Then $R_1$ and $R_2$ must be the children of $R$ in the original SPN $\mathcal{S}$. Since $R_1$ does not appear in $\mathcal{A}_{X_2}$, then $X_2\not\in\text{scope}(R_1)$, otherwise $R_1$ will occur in $\mathcal{A}_{X_2}$ and $R_1$ will be a new shared variable below $R$, which is a contradiction to the fact that $R$ is the last shared variable. Since $R_1$ is the root of the sub-ADD of $\mathcal{A}_{X_1}$ rooted at $R$, hence no variable whose scope contains $X_2$ will occur as a descendant of $R_1$, otherwise the scope of $R_1$ will also contain $X_2$, which is again a contradiction. On the other hand, each node appearing in $\mathcal{A}_{X_2}$ corresponds to a variable whose scope intersects with $\{X_2\}$ in the original SPN, hence no node in $\mathcal{A}_{X_2}$ will appear in $\mathcal{A}_{X_1}$. The same analysis also applies to $R_2$. Hence no node will be shared between $\mathcal{A}_{X_1}$ and $\mathcal{A}_{X_2}$.

If one of the two roots, say, $R_1$, is a variable node and the other root, say, $R_2$, is a product node, then we consider two subcases.  If $R_1$ appears as a child of $R_2$ then we recursively multiply $R_1$ with the child of $R_2$ that is labeled with the same variable as $R_1$ (Lines 16-18).  If $R_1$ does not appear as a child of $R_2$, then we link the ADD rooted at $R_1$ to be a new child of the product node $R_2$ (Lines 19-22). Again, let $R$ be the last shared node between $\mathcal{A}_{X_1}$ and $\mathcal{A}_{X_2}$ during the multiplication process. Then both $R_1$ and $R_2$ are children of $R$, which corresponds to a sum node in the original SPN $\mathcal{S}$. Furthermore, both $R_1$ and $R_2$ lie in the same branch of $R$ in $\mathcal{S}$. In this case, since $\text{scope}(R_1)\subseteq\text{scope}(R)$, $\text{scope}(R_1)$ must be a strict subset of $\text{scope}(R)$ otherwise we would have $\text{scope}(R_1) = \text{scope}(R)$ and $R_1$ will also appear in $\mathcal{A}_{X_2}$, which contradicts the fact that $R$ is the last shared node between $\mathcal{A}_{X_1}$ and $\mathcal{A}_{X_2}$. Hence here we only need to discuss the two cases where either their scope disjoint (Line 16-18) or the scope of one root is a strict subset of another (Line 19-22).

If the two roots are both product nodes or at least one of them is a sum node, then we simply create a new product node and link $\mathcal{A}_{X_1}$ and $\mathcal{A}_{X_2}$ to be children of the product node. The above analysis also applies here since sum nodes in symbolic ADD are created by summing out processed variable nodes and we eliminate all the hidden variables using the inverse topological ordering.

The last step in Alg.~\ref{alg:multiplication} (Line 35) simplifies the symbolic ADD by merging all the connected product nodes without changing the function it encodes. This can be done in the following way: suppose $\otimes_1$ and $\otimes_2$ are two connected product nodes in symbolic ADD $\mathcal{A}$ where $\otimes_1$ is the parent of $\otimes_2$, then we can remove the link between $\otimes_1$ and $\otimes_2$ and connect $\otimes_1$ to every child of $\otimes_2$. It is easy to verify that such an operation will remove links between connected product nodes while keeping the encoded function unchanged. 

To sum-out one hidden variable $H$, Alg.~\ref{alg:sumout} simply replaces $H$ in $\mathcal{A}$ by a symbolic sum node $\oplus$ and labels each edge of $\oplus$ with weights obtained from $\mathcal{A}_H$.

We now present the Variable Elimination (VE) algorithm in Alg.~\ref{alg:veadd} used to recover the original SPN $\mathcal{S}$, taking Alg.~\ref{alg:multiplication} and Alg.~\ref{alg:sumout} as two operations $\otimes$ and $\oplus$ respectively.
\begin{algorithm}[htb]
\centering
\caption{Variable Elimination for BN with ADDs}
\label{alg:veadd}
\begin{algorithmic}[1]
\REQUIRE	BN $\mathcal{B}$ with ADDs for all observable variables and hidden variables
\ENSURE	Original SPN $\mathcal{S}$
\STATE	$\pi\leftarrow$ the inverse topological ordering of all the hidden variables present in the ADDs
\STATE	$\Phi\leftarrow\{\mathcal{A}_X~|~X\text{ is an observable variable}\}$
\FOR {each hidden variable $H$ in $\pi$}
	\STATE	$P\leftarrow\{\mathcal{A}_X~|~ H\text{ appears in }\mathcal{A}_X\}$
	\STATE	$\Phi\leftarrow\Phi\backslash P$ $\cup$ $\{\oplus_{H}\otimes_{\mathcal{A}\in P}\mathcal{A}\}$
\ENDFOR 
\STATE \textbf{return} 	$\Phi$
\end{algorithmic}
\end{algorithm}

In each iteration of Alg.~\ref{alg:veadd}, we select one hidden variable $H$ in ordering $\pi$, multiply all the ADDs $\mathcal{A}_X$ in which $H$ appears using Alg.~\ref{alg:multiplication} and then sum-out $H$ using Alg.~\ref{alg:sumout}. The algorithm keeps going until all the hidden variables have been summed out and there is only one symbolic ADD left in $\Phi$. The final symbolic ADD gives us the SPN $\mathcal{S}$ which can be used to build BN $\mathcal{B}$. Note that the SPN returned by Alg.~\ref{alg:veadd} may not be literally equal to the original SPN since during the multiplication of two symbolic ADDs we effectively remove redundant nodes by merging connected product nodes. Hence, the SPN returned by Alg.~\ref{alg:veadd} could have a smaller size while representing the same probability distribution. An example is given in Fig.~\ref{fig:bn2spn} to illustrate the recovery process. The BN in Fig.~\ref{fig:bn2spn} is the one constructed in Fig.~\ref{fig:spn2bn}.
\begin{figure*}[htb]
\centering
	\includegraphics[width=\textwidth]{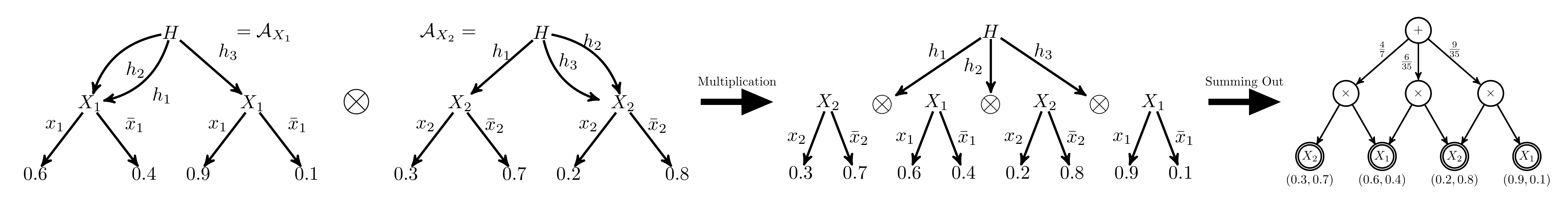}
\caption{Multiply $\mathcal{A}_{X_1}$ and $\mathcal{A}_{X_2}$ that contain $H$ using Alg.~\ref{alg:multiplication} and then sum out $H$ by applying Alg.~\ref{alg:sumout}. The final SPN is isomorphic with the SPN in Fig.~\ref{fig:spn2bn}.}
\label{fig:bn2spn}
\end{figure*}

Note that Alg.~\ref{alg:multiplication} and~\ref{alg:sumout} apply only to ADDs constructed from normal SPNs by Alg.~\ref{alg:bn-ADD-observable} and~\ref{alg:bn-ADD-hidden} because such ADDs naturally inherit the topological ordering of sum nodes (hidden variables) in the original SPN $\mathcal{S}$. Otherwise we need to pre-define a global variable ordering of all the sum nodes and then arrange each ADD such that its topological ordering is consistent with the pre-defined ordering. Note also that Alg.~\ref{alg:multiplication} and~\ref{alg:sumout} should be implemented with caching of repeated operations in order to ensure that directed acyclic graphs are preserved.  Alg.~\ref{alg:veadd} suggests that an SPN can also be viewed as a \emph{history record} or \emph{caching} of the sums and products computed during inference when applied to the resulting BN with ADDs. 

We now bound the run time of Alg.~\ref{alg:veadd}.
\begin{theorem}
\label{thm:main2}
Alg.~\ref{alg:veadd} builds SPN $\mathcal{S}$ from BN $\mathcal{B}$ with ADDs in $O(N|\mathcal{S}|)$.
\end{theorem}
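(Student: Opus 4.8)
The plan is to bound separately the cost of the two elementary operations---summing-out (Alg.~\ref{alg:sumout}) and multiplication (Alg.~\ref{alg:multiplication})---and then sum over the whole elimination schedule of Alg.~\ref{alg:veadd}. First I would dispose of the summing-out step: Alg.~\ref{alg:sumout} merely relabels the edges emanating from $H$ and replaces $H$ by a symbolic $\oplus$ node, so its cost is proportional to the out-degree of the sum node $H$ in $\mathcal{S}$. Summed over all hidden variables this is at most $\mathfrak{E}(\mathcal{S}) = O(|\mathcal{S}|)$. Identifying the factor set $P$ and the bookkeeping on $\Phi$ in each iteration can likewise be charged to the total ADD size, which I bound next.

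The crux is the total cost of the multiplications. I would first establish a structural invariant: every symbolic ADD that ever appears in $\Phi$ is a substructure of $\mathcal{S}$ in the sense that each of its variable nodes corresponds to a sum node of $\mathcal{S}$, each symbolic $\otimes$ node corresponds to a contraction of product nodes of $\mathcal{S}$, and the ADD respects the single global ordering inherited from the topological ordering of $\mathcal{S}$. This holds initially because each $\mathcal{A}_X$ is, by Alg.~\ref{alg:bn-ADD-observable}, the sub-SPN induced by $\{X\}$ with product nodes contracted; and it is preserved by Alg.~\ref{alg:multiplication} because, as argued in the text preceding the theorem, whenever the two roots differ they are distinct children of a common node of $\mathcal{S}$ that share no descendant, so the recursion never has to reorder or replicate a shared subgraph and Line~35 only merges already-coincident product nodes.

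Given this invariant, I would charge the multiplication work to node--ADD incidences. A sum node $v$ of $\mathcal{S}$ occurs in $\mathcal{A}_X$ exactly when $X\in\text{scope}(v)$, hence in at most $|\text{scope}(v)|\le N$ of the initial ADDs; consequently the total number of (node, ADD) incidences is $\sum_{v}|\text{scope}(v)|\le N\mathfrak{V}(\mathcal{S}) = O(N|\mathcal{S}|)$, which also re-derives the bound of Thm.~\ref{thm:size} on the combined initial ADD size. Because Alg.~\ref{alg:multiplication} recurses only when the two roots carry the same variable (the same sum node of $\mathcal{S}$), and because repeated operations are cached so that once two copies of a node are aligned they are never separated, each incidence is touched only $O(1)$ times across all multiplications. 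Hence the total multiplication cost, and therefore the running time of Alg.~\ref{alg:veadd}, is $O(N|\mathcal{S}|)$.

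I expect the charging argument of the last paragraph to be the main obstacle. The delicate point is to rule out a blow-up of the intermediate factors: one must use that all ADDs share the one global variable ordering supplied by $\mathcal{S}$ (so multiplication aligns rather than duplicates structure) together with the product-node merging of Line~35 to guarantee that each intermediate factor remains bounded by the corresponding substructure of $\mathcal{S}$, and that caching preserves the DAG (rather than unrolling it into a tree), so that the $O(1)$-per-incidence accounting is valid.
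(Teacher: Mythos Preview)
Your proposal is correct and shares the paper's essential structural insight---namely, that every intermediate symbolic ADD produced during the elimination is (up to product-node contraction) a sub-SPN of $\mathcal{S}$ and hence has size at most $|\mathcal{S}|$. Where you diverge is in the accounting for the multiplication cost. You charge the work to node--ADD incidences: sum node $v$ lies in $|\text{scope}(v)|\le N$ of the initial ADDs, giving $O(N|\mathcal{S}|)$ incidences in total, each touched $O(1)$ times thanks to caching and the shared ordering. The paper instead uses a much shorter counting argument: since $\Phi$ begins with $N$ factors and ends with one, and each binary $\otimes$ reduces $|\Phi|$ by one, there are exactly $N-1$ multiplications over the entire run of Alg.~\ref{alg:veadd}; each costs at most $|\mathcal{A}_{X_1}|+|\mathcal{A}_{X_2}|\le 2|\mathcal{S}|$ by the invariant, yielding $(N-1)\cdot O(|\mathcal{S}|)=O(N|\mathcal{S}|)$ directly. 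This bypasses entirely the $O(1)$-per-incidence claim and the caching subtleties you identify as the main obstacle. Your route, in exchange, gives a finer picture of where the work actually concentrates (cost is attributed to scope sizes rather than a uniform $|\mathcal{S}|$ bound per multiplication). Both treatments handle summing-out identically, bounding it by $O(|\mathcal{S}|)$ in total.
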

\begin{proof}
First, it is easy to verify that Alg.~\ref{alg:multiplication} takes at most $|\mathcal{A}_{X_1}| + |\mathcal{A}_{X_2}|$ operations to compute the multiplication of $\mathcal{A}_{X_1}$ and $\mathcal{A}_{X_2}$. More importantly, the size of the generated $\mathcal{A}_{X_1, X_2}$ is also bounded by $|\mathcal{S}|$. This is because all the common nodes and edges in $\mathcal{A}_{X_1}$ and $\mathcal{A}_{X_2}$ are shared (not duplicated) in $\mathcal{A}_{X_1, X_2}$.  Also, all the other nodes and edges which are not shared between $\mathcal{A}_{X_1}$ and $\mathcal{A}_{X_2}$ will be in two branches of a product node in $\mathcal{S}$, otherwise they will be shared by $\mathcal{A}_{X_1}$ and $\mathcal{A}_{X_2}$ as they have the same scope which contain both $X_1$ and $X_2$. This means that $\mathcal{A}_{X_1, X_2}$ can be viewed as a sub-SPN of $\mathcal{S}$ induced by the node set $\{X_1, X_2\}$ with some product nodes contracted out. So we have $|\mathcal{A}_{X_1, X_2}|\leq|\mathcal{S}|$.

Now consider the for loop (Lines 3-6) in Alg.~\ref{alg:veadd}. The loop ends once we've summed out all the hidden variables and there is only one ADD left. Note that there may be only one ADD in $\Phi$ during some intermediate steps, in which case we do not have to do any multiplication. In such steps, we only need to perform the sum out procedure without multiplying ADDs. Since there are $N$ ADDs at the beginning of the loop and after the loop we only have one ADD, then there is exactly $N-1$ multiplications during the for loop, which costs at most $(N-1)|\mathcal{S}|$ operations. Furthermore, in each iteration there is exactly one hidden variable being summed out. So the total cost for summing out all the hidden variables in Lines 3-6 is bounded by $|\mathcal{S}|$. 

Overall, the operations in Alg.~\ref{alg:veadd} are bounded by $(N-1)|\mathcal{S}| + |\mathcal{S}| = O(N|\mathcal{S}|)$.
\end{proof}
\begin{proof}[Proof of Thm.~\ref{thm:bn2spn}]
Thm.~\ref{thm:main2} and the analysis above prove Thm.~\ref{thm:bn2spn}.
\end{proof}

\section{Discussion}
Thm.~\ref{thm:spn2bn} together with Thm.~\ref{thm:bn2spn} establish a relationship between BNs and SPNs: SPNs are no more powerful than BNs with ADD representation. Informally, a model is considered to be more powerful than another if there exists a distribution that can be encoded in polynomial size in some input parameter $N$, while the other model requires exponential size in $N$ to represent the same distribution. The key is to recognize that the CSI encoded by the structure of an SPN as stated in Proposition.~\ref{prop:csi} can also be encoded explicitly with ADDs in a BN. We can also view an SPN as an inference machine that efficiently records the history of the inference process when applied to a BN. Based on this perspective, an SPN is actually storing the calculations to be performed (sums and products), which allows online inference queries to be answered quickly.  The same idea also exists in other fields, including propositional logic (d-DNNF) and knowledge compilation (AC). 

The constructed BN has a simple bipartite structure, no matter how deep the original SPN is. However, we can relate the depth of an SPN to a lower bound on the tree-width of the corresponding BN obtained by our algorithm. Without loss of generality, let's assume that product layers alternate with sum layers in the SPN we are considering. Let the height of the SPN, i.e., the longest path from the root to a terminal node, be $K$. By our assumption, there will be at least $\lfloor K/2\rfloor$ sum nodes in the longest path. Accordingly, in the BN constructed by Alg.~\ref{alg:bn-structure}, the observable variable corresponding to the terminal node in the longest path will have in-degree at least $\lfloor K/2\rfloor$. Hence, after moralizing the BN into an undirected graph, the clique-size of the moral graph is bounded below by $\lfloor K/2\rfloor + 1$. Note that for any undirected graph the clique-size minus 1 is always a lower bound of the tree-width. We then reach the conclusion that the tree-width of the constructed BN has a lower bound of $\lfloor K/2\rfloor$. In other words, \emph{the deeper the SPN, the larger the tree-width of the BN constructed by our algorithm and the more complex are the probability distributions that can be encoded}. This observation is consistent with the conclusion drawn in~\cite{delalleau2011shallow} where the authors prove that there exist families of distributions that can be represented much more efficiently with a deep SPN than with a shallow one, i.e. with substantially fewer hidden internal sum nodes. Note that we only give a proof that there exists an algorithm that can convert an SPN into a BN without any exponential blow-up. There may exist other techniques to convert an SPN into a BN with a more compact representation and also a smaller tree-width.

High tree-width is usually used to indicate a high inference complexity, but this is not always true as there may exist lots of CSI between variables, which can reduce inference complexity. CSI is precisely what enables SPNs and BNs with ADDs to compactly represent and tractably perform inference in distributions with high tree-width. In contrast, in a Restricted Boltzmann Machine, which is an undirected bipartite Markov network, CSI may not be present or not exploited, which is why practitioners have to resort to approximate algorithms, such as contrastive divergence~\cite{carreira2005contrastive}. Similarly, approximate inference is required in bipartite diagnostic BNs such as the Quick Medical Reference network~\cite{shwe1991probabilistic} since causal independence is insufficient to reduce the complexity, while CSI is not present or not exploited.

\section{Conclusion}
In this paper, we establish a precise connection between BNs and SPNs by providing a constructive algorithm to transform between these two models. To simplify the proof, we introduce the notion of normal SPN and describe the relationship between consistency and decomposability in SPNs. We analyze the impact of the depth of SPNs onto the tree-width of the corresponding BNs. Our work also provides a new direction for future research about SPNs and BNs.  Structure and parameter learning algorithms for SPNs can now be used to indirectly learn BNs with ADDs.  In the resulting BNs, correlations are not expressed by links directly between observed variables, but rather through hidden variables that are ancestors of correlated observed variables. The structure of the resulting BNs can be used to study probabilistic dependencies and causal relationships between the variables of the original SPNs.  It would also be interesting to explore the opposite direction since there is already a large literature on parameter and structure learning for BNs.  One could learn a BN from data and then exploit CSI to convert it into an SPN.

\bibliography{reference}
\bibliographystyle{icml2014}
\end{document}